\theoremstyle{thmstyleone}%
\newtheorem{theorem}{Theorem}
\newtheorem{lemma}{Lemma}
\theoremstyle{thmstyletwo}%
\newcommand{\NAMEA}{QUPID}
\newcommand{\NAMEB}{R-QUPID}
\theoremstyle{thmstylethree}%
\newtheorem{definition}{Definition}%
\begin{document}

\title{\Large{QUPID: A Partitioned Quantum Neural Network for Anomaly Detection in Smart Grid}}

\author{\IEEEauthorblockN{Hoang M. Ngo\IEEEauthorrefmark{2},
Tre’ R. Jeter\IEEEauthorrefmark{2}, 
Jung Taek Seo\IEEEauthorrefmark{3}, and
My T. Thai\IEEEauthorrefmark{2}\IEEEauthorrefmark{1}\thanks{$^*$Corresponding author}}\\
University of Florida, Gainesville, FL, USA\IEEEauthorrefmark{2}\\
Gachon University, Seongnam-daero 1342, Seongnam-si, Republic of Korea\IEEEauthorrefmark{3}\\
Email: \{hoang.ngo, t.jeter\}@ufl.edu, seojt@gachon.ac.kr, and mythai@cise.ufl.edu
}



\maketitle

\begin{abstract}
Smart grid infrastructures have revolutionized energy distribution, but their day-to-day operations require robust anomaly detection methods to counter risks associated with cyber-physical threats and system faults potentially caused by natural disasters, equipment malfunctions, and cyber attacks. Conventional machine learning (ML) models are effective in several domains, yet they struggle to represent the complexities observed in smart grid systems. Furthermore, traditional ML models are highly susceptible to adversarial manipulations, making them increasingly unreliable for real-world deployment. Quantum ML (QML) provides a unique advantage, utilizing quantum-enhanced feature representations to model the intricacies of the high-dimensional nature of smart grid systems while demonstrating greater resilience to adversarial manipulation. In this work, we propose QUPID, a partitioned quantum neural network (PQNN) that outperforms traditional state-of-the-art ML models in anomaly detection. We extend our model to R-QUPID that even maintains its performance when including differential privacy (DP) for enhanced robustness. Moreover, our partitioning framework addresses a significant scalability problem in QML by efficiently distributing computational workloads, making quantum-enhanced anomaly detection practical in large-scale smart grid environments. Our experimental results across various scenarios 
exemplifies the efficacy of QUPID and R-QUPID to significantly improve anomaly detection capabilities and robustness compared to traditional ML approaches.
\end{abstract}

\begin{IEEEkeywords}
Smart Grid, Anomaly Detection, Quantum Machine Learning, Adversarial Robustness
\end{IEEEkeywords}

\section{Introduction}\label{sec:intro}
Modern smart grid systems are becoming progressively dependent on automated anomaly detection to ensure their security and reliability. These infrastructures are an interconnected network of sensors, controllers, and distributed energy resources that each produce complex forms of data. However, detecting anomalies in this type of environment is a fundamental challenge due to factors such as energy fluctuations \cite{chen2022physics,liu2023non,mu2023graph,youssef2023adversarial}, renewable energy variability \cite{xie2023bayesian,avramidis2022flexicurity,arpogaus2023short,lu2023risk}, and cyber-physical threats \cite{presekal2024spatio,li2024distributionally,wang2024grid2vec,takiddin2024spatio}. A successful anomaly detection mechanism depends on two essential aspects: performance and robustness. High performance ensures accurate identification of anomalous behavior in real-time while robustness guarantees resilience against adversarial manipulation. Without access to models that can excel in both regards, smart grid systems remain vulnerable to costly disruptions, security breaches, and operational failures.

Despite the increased usage of machine learning (ML) models for anomaly detection in smart grid systems, existing approaches are limited in both performance and adversarial robustness. Traditional ML models struggle to effectively interpret the high-dimensional and non-linear characteristics of smart grid data because they are designed with the assumption that they will operate on structured data and in structured environments \cite{pmlr-v48-daib16}. Beyond performance, another growing concern is their susceptibility to adversarial manipulation. Disturbances introduced in the input data can manipulate model outputs, leading to unreliable predictions \cite{khan2024adaptedge}. In an effort to mitigate these threats, traditional ML approaches often introduce differential privacy (DP) via calibrated Gaussian or Laplacian noise \cite{wang2022privacy,lu2023privacy,huang2022dpwgan}. However, DP protocols have recently become significantly ineffective in traditional ML models as modern day adversaries have learned to adapt their attacks to bypass them \cite{giraldo2020adversarial,li2024analyzing,li2023fine}. As a result, many traditional ML models miss anomalies and incorrectly flag normal operations, making them extremely unreliable. 

To address the limitations of traditional ML models for anomaly detection in smart grid environments, we propose QUPID as a \underline{\textbf{Q}}uant\underline{\textbf{U}}m \underline{\textbf{P}}art\underline{\textbf{I}}tioned neural network for anomaly \underline{\textbf{D}}etection. Smart grid data can come in the form of real and complex values. However, traditional ML models are only capable of interpreting real-value data, making them ineffective in realistic cases. Because of properties innate to quantum computing, QUPID can process complex values, resulting in more precise and enhanced feature representations in relationships produced by smart grid data. Additionally, QUPID introduces a partitioning scheme designed to address a key scalability challenge in QML. Another distinct advantage of quantum computing is the presence of inherent quantum noise, an effect that cannot be reliably simulated by classical systems. Although conventional models add classical noise to achieve DP guarantees, they have become 
increasingly vulnerable to sophisticated attacks, whereas quantum-induced noise can act as a privacy amplifier, significantly strengthening the guarantees of the classical DP mechanism. This amplified robustness makes traditional adversarial attacks far less effective and impractical to simulate. Ultimately, QUPID yields state-of-the-art anomaly detection performance, while its robust variant, R-QUPID, integrates quantum-induced noise for enhanced privacy, offering certified robustness guarantees against adversarial threats.

\noindent \textbf{Contributions.} Our main contributions are as follows:
\begin{itemize}
    \item We develop QUPID, a QML model capable of encoding and processing complex value data, addressing the performance limitations of traditional ML approaches.
    \item  We introduce a partitioning scheme in QUPID that addresses the scalability limitations native to QML by reducing the number of qubits required while improving training efficiency.
    \item  To our knowledge, 
    our work is the first to formally analyze how quantum noise can amplify the privacy guarantees of a classical noise mechanism. We provide theoretical proofs of these amplification effects and demonstrate their role in enhancing certifiable robustness to adversarial manipulation in a hybrid quantum-classical model.
    \item  QUPID and R-QUPID consistently outperform five state-of-the-art baselines including four classical deep learning (DL) models and one hybrid quantum-classical model 
    across 15 scenarios of the ICS dataset~\mbox{\cite{Morris_ICSDataset}} and 7 metrics, exemplifying superior anomaly detection performance and adversarial robustness.

\end{itemize}

\noindent\textbf{Organization.} The rest of the manuscript is structured as follows: Section~\ref{sec:related} spans relevant works related to anomaly classification in smart grid systems, QML usage for classification, and a robustness comparison of traditional ML models to QML models. Section~\ref{sec:background} gives background information about anomalies seen in smart grid environments and quantum neural networks (QNNs). We formally describe the model architecture of QUPID and its certified robustness guarantees via quantum noise in Section~\ref{sec:pqnn}. Section~\ref{sec:exp} outlines our experimental analysis. Section~\ref{sec:con} provides closing remarks and concludes the paper.
\section{Related Works}\label{sec:related}
\noindent \textbf{Anomaly Classification in Smart Grids.} Anomaly detection in smart grid infrastructures is critical in ensuring operational reliability and security. Conventional methods like supervised statistical thresholding and unsupervised rule-based heuristics are well-established methods but often struggle to adapt to the complex, high-dimensional nature of smart grid data \cite{komadina2024comparing}. Statistical thresholding methods using Receiver Operating Characteristics (ROC) curves \cite{reiss2023mean}, precision recall curves \cite{tatbul2018precision,brodersen2010binormal}, or equal error rate (EER) calculations \cite{ullah2018anomalous}, optimize particular evaluation metrics and trade-offs, but heavily depend on the ability of the underlying model to produce distinct anomaly scores. If the model itself struggles to differentiate between normal and anomalous patterns, an optimal threshold alone will not guarantee high performance. Besides, supervised methods such as statistical thresholding require labeled data which is unrealistic in most cases, and their static nature limits adaptability to ever changing data distributions. Unsupervised methods aim to address these issues, but suffer from their own limitations. Many unsupervised approaches require continuous calibration to adapt to dynamic thresholds \cite{ahmad2017unsupervised}, while many parametric methods require data-specific tuning \cite{poon2021unsupervised}. Clustering and density-based methods are well-established approaches in unsupervised ML, but they rely heavily on assumptions about data structure that may be unrepresentative of real-world smart grid environments \cite{zhao2023searching}.\\

\noindent \textbf{Quantum ML for Classification.} QML has materialized as an auspicious substitute for traditional ML approaches, as it takes advantage of quantum principles to enhance computational efficiency and data representation. QML models are able to process complex-valued quantum states, unlike traditional models, that yield more useful feature embeddings and improved pattern recognition. Quantum enhanced classifiers such as variational quantum circuits (VQCs) \cite{wu2023more,li2021vsql,chen2024novel} and quantum kernels (QKs) \cite{alvarez2025benchmarking} have been implemented for many classification tasks in domains ranging from image recognition to cybersecurity \cite{senokosov2024quantum,saggi2024mqml,DBLP:journals/iotj/SatpathyVBAMF24,innan2024financial}. 
 
Despite these recent advancements, most quantum neural networks (QNNs) are limited to binary classification tasks due to resource constraints and/or intensive classical post-processing. Addressing this limitation, MORE \cite{wu2023more} is presented as a VQC-based multi-classifier leveraging quantum state tomography to extract full information with a single readout qubit. With a VQC clustering approach, MORE generates quantum labels and uses supervised learning techniques to match inputs to their respective labels. Comparably, variational shadow quantum learning (VSQL) \cite{li2021vsql} is a hybrid quantum-classical approach utilizing compressed representations of quantum data (classical shadows) for feature extraction. Variational shadow quantum circuits execute convolution-like operations while a fully connected classical neural network is used for the final classification task. The VSQL approach significantly reduces the number of parameters needed for training and alleviates the vanishing gradient problem seen in QML (Barren Plateau \cite{mcclean2018barren}). 

By simply removing a global pooling layer typically seen in classical methods, \cite{chen2024novel} propose hybrid VQCs and algorithms that reduce computational complexity and information loss, leading to higher performance in image classification tasks. Exploring the practicality of QKs, \cite{alvarez2025benchmarking} presents a benchmarking investigation comparing quantum kernel estimation and quantum kernel training with classical ML approaches. QK results are typically better on ad-hoc datasets, but vary on standard datasets. Among these observations is that increasing computational cost is not directly proportional to performance either, suggesting that optimization-based approaches should be implemented in QML frameworks. 

Despite these improvements, current QML approaches are still limited in terms of performance, scalability, and dataset complexity. The majority of comparisons rely on simple traditional model architectures such as one-layer neural networks, ResNet, and MaxViT on proportionally small-scale datasets. Our approach, QUPID, is tailored for multi-classification of complex anomalies in smart grid environments and is comparable to current state-of-the-art classical classification methods. QUPID also addresses the scalability issue by using a partitioning scheme to manage workloads.\\

\noindent \textbf{Adversarial Robustness of ML and QML Models.} The robustness of conventional ML models has been thoroughly challenged via adversarial threats where perturbations are introduced to cause misclassifications. In cases of anomaly detection, misclassifications due to adversarial threats can be the cause of fatal accidents. Traditional ML approaches typically rely on adversarial training and differential privacy (DP) techniques via Gaussian or Laplacian noise injection. However, adversarial attacks have evolved enough to exploit the learned noise distributions, making these defenses ineffective \cite{giraldo2020adversarial}. On the other hand, QML models innately introduce quantum noise due to their intrinsic operations, which is thought to provide instinctive defenses against adversarial threats \cite{wendlinger2024comparative,watkins2023quantum,west2023benchmarking}. Current research has committed to investigating the impact of quantum noise on model robustness to reveal if QML models can withstand adversarial threats that commonly succeed in traditional ML models. 

In particular, VQCs exhibit higher resilience against classical adversarial threats. Although adversarial training significantly enhances traditional ML model robustness, it has proven to have minimal impact on QML models \cite{west2023benchmarking}. While the full extent of quantum-specific adversarial threats is underexplored, the few efforts in research have proven to degrade the performance of QML models \cite{wendlinger2024comparative}. Moreover, as it pertains to QML models, the incorporation of differential privacy (DP) techniques as in DP-VQCs does not significantly degrade model performance, a trend typically observed in classical ML models \cite{watkins2023quantum}. Recent works \mbox{\cite{Du2021Depolarizing,Hirche2023}} have explored quantum-induced noise as a natural source of randomness to achieve DP guarantees, thereby enhancing the adversarial robustness of purely QML models that operate exclusively on quantum inputs. However, existing literature treats either classical noise or quantum noise as the sole mechanism for achieving privacy. To date, no work has investigated the combined use of both noise types as a unified privacy-preserving strategy for 
hybrid QML models that integrates both classical and quantum components. This gap represents an important opportunity for advancing the privacy and robustness of practical QML systems.
\section{Background}\label{sec:background}
\subsection{Taxonomy of Anomalies in Smart Grids}
In an effort to provide a clear understanding of anomalies in smart grids in a nonexhaustive manner, we categorize them into three main types: normal operations, natural events, and attack scenarios. Normal operations refer to expected energy fluctuations and load variations that naturally occur in smart grid systems. Natural events can be sub-categorized into short-circuit faults and line maintenance. Short-circuit faults can occur at different locations along a transmission line. Line maintenance includes the disabling of relays to preserve the overall system. 

On the other hand, attack scenarios span numerous potential cyber threats that can disrupt grid stability, including remote tripping command injections, relay setting changes, and data injection attacks. Remote tripping attacks occur when an adversary has bypassed outside defenses and gains access to the system. From there, the attacker sends malicious commands to relays that force breakers to open. Relay setting attacks occur when an adversary reconfigures the distance protection scheme of a relay, inhibiting it from recognizing valid system faults and valid incoming commands. Data injection attacks falsify voltage, current, and sequence measurements to imitate fault signals in the system. These false signals can deceive system operators and potentially cause major outages.

\subsection{Quantum Computing}
\noindent \textbf{Applications in Smart Grids}

Grid systems are rapidly evolving due to the growing integration of intermittent renewable energy sources (such as solar and wind), electric vehicles, and distributed energy resources. This leads to optimization and forecasting problems with enormous search spaces and complex, non-linear relationships. The role of quantum computing in smart grids is not to replace classical computing across all tasks, but rather to address a specific subset of problems that are becoming intractable even for the most advanced classical supercomputers. For instance, QML is anticipated to generate and uncover patterns that are infeasible for classical models to detect~\mbox{\cite{Schuld2019}}. This allows QML to identify complex patterns in grid data that are invisible to classical algorithms, leading to more accurate load forecasting or anomaly detection.

Access to quantum computing has become increasingly feasible. Major technology companies such as IBM and Google now offer quantum cloud computing platforms that enable broad and relatively easy access to quantum processors. In addition, the integration of Quantum Processing Units (QPUs) into supercomputing architectures is underway, with companies like NVIDIA actively working toward hybrid systems that combine CPUs, GPUs, and QPUs. This paves the way for a near future in which quantum-accelerated supercomputers become widely available and more convenient for end-users.

In the current Noisy Intermediate-Scale Quantum (NISQ) era, we are working with quantum devices that include a few hundred to over a thousand qubits, with gate fidelities exceeding 99.9\%~\mbox{\cite{Lin2025}}. Although full-scale fault-tolerant quantum computing remains a long-term goal, near-term deployment of quantum computing is becoming practical for localized applications—for example, within a single microgrid or a small cluster of distributed energy resources (DERs). Given the increasing prevalence of distributed systems, QPU-integrated supercomputers are well-suited to monitor, analyze, and control edge components in the power grid where fast, high-dimensional decision-making is required.

\noindent \textbf{Fundamental Concepts in Quantum Computing.} Quantum computing operates on quantum bits (\emph{qubits}), which, unlike classical bits, can exist in a superposition of 0 and 1. An $n$-qubit system is represented as a superposition of $2^n$ basis states. For example, a 2-qubit system is expressed in the form:
\begin{equation}
    |\psi\rangle = \alpha_{00} |00\rangle + \alpha_{01} |01\rangle + \alpha_{10} |10\rangle + \alpha_{11} |11\rangle
\end{equation}

Such states, known as \emph{pure states}, are fully coherent quantum states that can be described by ket vectors. However, real-world quantum systems often exist in \emph{mixed states}, which are statistical ensembles of pure states. These ensembles are represented by a density matrix:
\begin{equation}
    \rho = \sum_i p_i |\psi_i\rangle\langle\psi_i|
\end{equation}
where $|\psi_i\rangle$ occurs with probability $p_i$. 

The evolution of a quantum state, pure or mixed, is governed by \emph{quantum channels}. The transformation of a density matrix $\rho$ through a quantum channel $\mathcal{U}$ is given by $\mathcal{U}(\rho)$.

In quantum mechanics, a measurement is described by a set of operators $\{M_m\}$, where the probability of obtaining outcome $m$ is given by:
\begin{equation}
    p(m) = \text{tr}(M_m^\dagger M_m \rho).
\end{equation}

\noindent \textbf{Quantum Neural Networks.} As a class of QML models, quantum neural networks (QNNs) utilize parameterized quantum circuits to process classical or quantum data. In this work, we primarily focus QNNs operating on classical data. In a supervised learning setting, a QNN approximates an unknown function \( K: \mathbb{X} \to \mathbb{Y} \) by training on a dataset \( S = \{(x_i, y_i)\}_{i=1}^{N} \), where \( x_i \) represents input data and \( y_i \) is the corresponding label. The function \( K \) can be decomposed into three main components: an encoder \(U_{enc}\), a parameterized ansatz \(U_{ans}\), and a measurement $M$ where:
\begin{itemize}
    \item \textbf{Encoder} \( U_{enc} \) maps the classical input data \( x \) to a quantum state \( |x\rangle \) in the Hilbert space \( \mathcal{H} \). This is achieved by applying \( U_{enc} \) parameterized by $x$ to an initial reference state:
    \begin{equation}
        |x\rangle = U_{enc}(x) |0\rangle^{\otimes n}.
    \end{equation}
    \item \textbf{Parameterized Ansatz} \( U_{ans}(\theta) \) is a unitary transformation governed by a set of trainable parameters \( \theta \). The ansatz defines the QNNs hypothesis space and evolves the encoded state:
    \begin{equation}
        |x'\rangle = U_{ans}(\theta) U_{enc}(x) |0\rangle^{\otimes n}.
    \end{equation}
    \item \textbf{Measurement} \( M = \{M_i\}\) maps the output quantum state to classical values. This is achieved by measuring the expectation value of a Hermitian observable \( M_i \) corresponding to the target label:
    \begin{equation}
        \ell_i(\theta; y_i) = \text{Tr}(M_i^\dagger M_i \ket{x'}\bra{x'}),
    \end{equation}
\end{itemize}

Training a QNN involves optimizing the parameters \( \theta \) to minimize a loss function that quantifies the discrepancy between predictions and true labels. Given a differentiable loss function \( f(\cdot) \) (e.g., Mean Squared Error (MSE) or Cross-Entropy (CE)), the objective is to minimize:
\begin{equation}
    L(\theta) = \sum_{i=1}^{N} f(\ell_i(\theta; y_i), y_i).
\end{equation}

Optimization is typically performed using classical gradient-based algorithms, such as gradient descent or its variants, yielding the optimal parameters:
\begin{equation}
    \theta^* = \arg \min_{\theta} L(\theta).
\end{equation}

\noindent \textbf{Adversarial Robustness.} Adversarial robustness refers to a model’s ability to maintain consistent predictions despite small, carefully crafted perturbations to the input data, called adversarial examples. Formally, given a model \( f: \mathbb{X} \to \mathbb{Y} \), where \( y = \{y_1,y_2,\dots, y_k\} \in \mathbb{Y}\) represents the output label distribution, $f$ is considered robust if the predicted label remains unchanged when small perturbations \( \alpha \) are added to the input \( x \). More formally, this can be expressed as:

\[
\max_{i \in [1,k]} [f(x)]_i = \max_{i \in [1,k]} [f(x + \alpha)]_i, \quad \forall \alpha \in B_p(L),
\]

\noindent where \( B_p(L) \) represents the \( p \)-norm ball of radius \( L \), that restricts the perturbation size to \( \|\alpha\|_p \leq L \).

Recently, researchers have explored using Differential Privacy (DP) as a solution to enhancing model robustness. DP is a privacy-preserving technique that introduces calibrated randomness to a model's output, ensuring that small changes in the input data do not significantly affect final predictions. More formally, $\varepsilon$-DP can be defined as:
\begin{definition}{$\varepsilon, \delta$-DP:} A randomized algorithm $\mathcal{M}$ satisfies $\varepsilon, \delta$-DP, if for any subset $\mathcal{S} \subseteq \textup{Range}(\mathcal{M})$ and any two adjacent datasets $D$ and $D'$, the following condition holds:
\begin{equation}
    Pr[\mathcal{M}(D) \in \mathcal{S}] \leq e^{\varepsilon} Pr[\mathcal{M}(D') \in \mathcal{S}] +\delta,
\end{equation}
where $\varepsilon$ is a privacy budget that controls the trade-off between privacy and model performance. Smaller values of $\varepsilon$ yield stronger privacy guarantees while introducing larger amounts of noise that may alter the predictions of the model, thus affecting the performance.
\label{dp} 
\end{definition}

The connection between adversarial robustness and DP is formalized through the idea that injecting DP noise to a model improves prediction stability. Specifically, consider a model \( f \) that satisfies \( (\epsilon, \delta) \)-DP with respect to a \( p \)-norm metric. The model \( f \) is robust to adversarial perturbations \( \alpha \) of size \( \|\alpha\|_p \leq 1 \) if the following condition holds:

\[
E([f(x)]_k) > e^\epsilon \max_{i: i \neq k} E([f(x)]_i) + (1 + e^\epsilon) \delta, \hspace*{2pt} \text{for some } k \in K,
\]

\noindent where \( E([f(x)]_k) \) represents the expected confidence score of the predicted label \( k \), and \( E([f(x)]_i) \) is the expected confidence score for other labels. 

This condition guarantees adversarial robustness for the set of inputs that satisfy the above inequality. A stronger DP guarantee (i.e., smaller \( \epsilon \) and \( \delta \)) leads to greater robustness across a broader range of inputs. In this work, we investigate how quantum noise amplifies the effects of DP, thereby further enhancing the robustness of the model.
\section{QUPID - A Partitioned Quantum Neural Network for Anomaly Classification}\label{sec:pqnn}
In this section, we describe the architecture of QUPID, our partitioned quantum neural network (PQNN) for anomaly classification. Then, we demonstrate QUPIDs robustness under the influence of quantum noise.

\subsection{Complex Input Data}
In this work, we consider a grid system whose state is measured by \( n \) phasor measurement units (PMUs). The \( i \)-th PMU provides comprehensive electrical measurements, including three-phase voltage magnitudes and angles, represented as  
\[
[V^{(i)}_a, V^{(i)}_b, V^{(i)}_c, \alpha^{(i)}_a, \alpha^{(i)}_b, \alpha^{(i)}_c],
\]
as well as symmetrical component voltage magnitudes and angles for positive, negative, and zero sequence components:  
\[
[V^{(i)}_+, V^{(i)}_-, V^{(i)}_z, \alpha^{(i)}_+, \alpha^{(i)}_-, \alpha^{(i)}_z].
\]
Similarly, the PMU records the three-phase current magnitudes and angles:  
\[
[I^{(i)}_a, I^{(i)}_b, I^{(i)}_c, \beta^{(i)}_a, \beta^{(i)}_b, \beta^{(i)}_c],
\]
along with the symmetrical component current magnitudes and angles:  
\[
[I^{(i)}_+, I^{(i)}_-, I^{(i)}_z, \beta^{(i)}_+, \beta^{(i)}_-, \beta^{(i)}_z].
\]

To effectively capture the complex relationships between magnitudes and angles, we represent the PMU measurements as complex phasors. Specifically, the measurement vector for the \( i \)-th PMU is expressed as:

\[
x^{(i)} = 
\begin{bmatrix}
V^{(i)}_a e^{j\alpha^{(i)}_a}, V^{(i)}_b e^{j\alpha^{(i)}_b}, V^{(i)}_c e^{j\alpha^{(i)}_c}, \\
V^{(i)}_+ e^{j\alpha^{(i)}_+}, V^{(i)}_- e^{j\alpha^{(i)}_-}, V^{(i)}_z e^{j\alpha^{(i)}_z}, \\
I^{(i)}_a e^{j\beta^{(i)}_a}, I^{(i)}_b e^{j\beta^{(i)}_b}, I^{(i)}_c e^{j\beta^{(i)}_c}, \\
I^{(i)}_+ e^{j\beta^{(i)}_+}, I^{(i)}_- e^{j\beta^{(i)}_-}, I^{(i)}_z e^{j\beta^{(i)}_z}
\end{bmatrix}
\]

\begin{figure*}[t]
    \centering\includegraphics[width=1\textwidth, trim={0.15cm 0.1cm 0.1cm 0.55cm},clip=True]{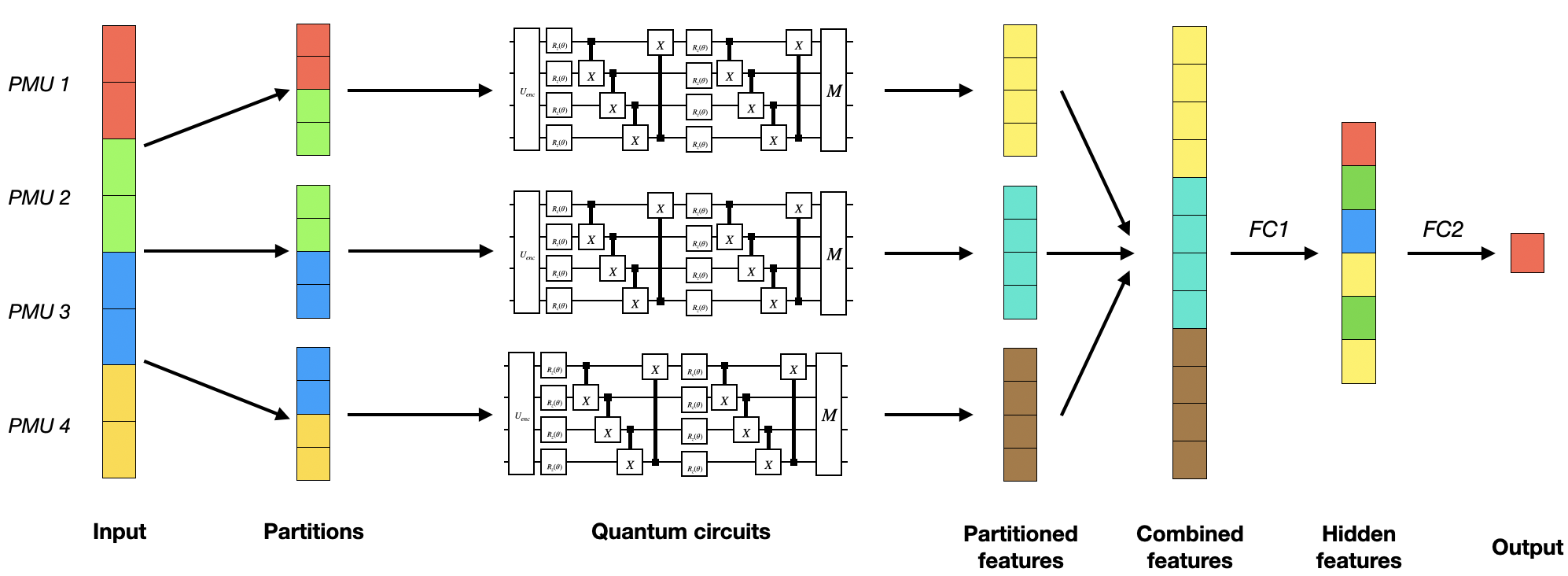}
    \caption{Architectural Design of QUPID.}
    \label{fig:QNN}
    
\end{figure*}

\subsection{Architectural Design of QUPID}
QUPID is composed of two key features that enhance representation capabilities and scalability. First, the model partitions input data by grouping information from neighboring phasor measurement units (PMUs). This localized processing improves sensitivity to subtle fluctuations in voltage and current, which typically impact a confined region rather than an entire grid system. Additionally, each partition is processed by a small parameterized quantum circuit (PQC), addressing the scalability challenges typically observed in QML and, more generally, in quantum computing. Second, QUPID encodes voltage and current as complex numbers in the quantum space, allowing the model to better capture the intricate relationships in the data. This encoding scheme leverages the inherent mathematical structure of quantum systems to improve overall expressiveness of learned representations. 

QUPID is illustrated in Figure~\ref{fig:QNN}. Given the grid state \(\mathbf{x} = [x^{(1)}, \dots, x^{(n)}]\), where each \( x^{(i)} = [x^{(i)}_1, \dots, x^{(i)}_m] \) represents complex phasor information from the \( i \)-th PMU, the input \( \mathbf{x} \) is partitioned into \( K \) subsets: \( \mathbf{x}^{(1)}, \dots, \mathbf{x}^{(K)} \). Without loss of generality, we assume each partition is composed of an equal number of PMUs. Specifically, each partition \( \mathbf{x}^{(k)} = [x^{(l_1)}, \dots, x^{(l_{t})}] \) consists of data from PMUs that are spatially or topologically connected, ensuring that the \( l_i \)-th PMU is a neighbor of the \( l_1 \)-th PMU. The notion of \textit{neighboring} can be defined on the basis of physical distance or grid topology (e.g., by the number of hops between PMUs in the system).

Each partition \(\mathbf{x}^{(k)}\) is processed by a PQC, denoted as \( C^{(k)} \), with \(q = \log_2(t \times m) \) qubits. In total, QUPID requires $\mathcal{O}(K\log_2(t \times m))$ qubits. As mentioned in Section~\ref{sec:background}, \( C^{(k)} \) consists of three main components: an encoder \( U^{(k)}_{\text{enc}} \), a parameterized ansatz \( U^{(k)}_{\text{ans}}(\theta_k) \), and a measurement operator \( M^{(k)} \). To encode complex numbers as quantum state amplitudes, we apply an amplitude encoding scheme:

\[
|\mathbf{x}^{(k)}\rangle = U^{(k)}_{\text{enc}} |0\rangle^{\otimes q} = \sum_{i=1}^{t} \sum_{j=1}^{m} d^{(l_i)}_j |l_i \times m + j\rangle
\]

\noindent where the encoding gates used to construct this encoder are based on~\cite{Iten2016}. The encoded quantum state is then transformed through the ansatz:

\[
|\mathbf{x'}^{(k)}\rangle = U^{(k)}_{\text{ans}} (\theta_k)|\mathbf{x}^{(k)}\rangle
\]

The output of each PQC is a latent feature vector \( h^{(k)} \in \mathbb{R}^{\log_2 q} \), where each element is computed as:

\[
h^{(k)}_i = \text{Tr}(M^{(k)\dagger}_i M^{(k)}_i |\mathbf{x'}^{(k)}\rangle \langle \mathbf{x'}^{(k)}|)
\]

Next, we concatenate the outputs from all \( K \) quantum circuits to form the final latent feature representation:

\[
\mathbf{h} = [h^{(1)}, \dots, h^{(K)}]
\]

The latent feature vector \( \mathbf{h} \) is then passed through a fully connected function $F$, consisting of two dense layers with ReLU activations, parameterized by \( \theta^{(fc)}_1 \) and \( \theta^{(fc)}_2 \): 

\[
z = F(\mathbf{h}) = \text{ReLU}(\theta^{(fc)}_2 \text{ReLU}(\theta^{(fc)}_1 \mathbf{h}))
\]

To better illustrate QUPIDs overall flow, we define its mechanism $\mathcal{M}_{\text{QUPID}}$, in terms of PQCs \( C^{(1)}, \dots, C^{(K)} \) and \( F \) as follows:  
\[
\mathcal{M}_{\text{QUPID}} = F \circ \left( C^{(1)} \oplus C^{(2)} \oplus \dots \oplus C^{(K)} \right)
\]

\noindent where \( \oplus \) denotes the parallel application of each PQC to its corresponding input partition, while \( \circ \) represents sequential mechanism composition.

Finally, QUPID outputs the predicted class probabilities $\hat{y}$ using a softmax layer applied to $z$.

\[
\hat{y} = \text{softmax}(\mathcal{M}_{\text{QUPID}}(\mathbf{x})) = \text{softmax}(z)
\]

The model is trained by optimizing the CE loss function:
\[
\mathcal{L} = - \sum_{i} y_i \log(\hat{y}_i)
\]
where \( y_i \) represents the true class label and \( \hat{y}_i \) is the predicted probability for class \( i \). Model parameters are updated using gradient descent based on the computed loss.

\subsection{Adversarial Robustness via Quantum Noise}
In traditional ML models, differential privacy (DP) is typically achieved by injecting Gaussian or Laplacian \textit{classical} noise into the data~\cite{Lcuyer2018CertifiedRT}. DP ensures that predictions remain differentially private with respect to input features, thus enhancing robustness against adversarial examples. In this section, we introduce a provably robust mechanism for \NAMEA, named \NAMEB, that integrates classical and quantum noise as additional layers to enhance privacy and robustness. Furthermore, we formally derive the privacy guarantees of \NAMEB and prove its certifiable robustness by analyzing the combined influence of classical and quantum noise.

As shown in~\cite{Lcuyer2018CertifiedRT}, a deep learning model can achieve a privacy guarantee of \((\epsilon, \delta)\)-DP by adding classical noise directly to the input. The DP parameters \(\epsilon\) and \(\delta\) depend on the characteristics of the noise. Specifically, to achieve \((\epsilon, \delta)\)-DP, Gaussian noise with a mean of zero and a standard deviation of  
\(
\sigma = \sqrt{2\ln\left(\frac{1.25}{\delta}\right)} \frac{\Delta L}{\epsilon}
\)  
is required, where \(\Delta\) represents the input sensitivity, and \(L\) denotes the attack bound. For later analysis, we denote the addition of classical noise as $A$. 

In contrast to classical noise, quantum noise naturally occurs after the application of each quantum gate. One widely studied type of quantum noise is depolarizing noise. As shown in~\cite{Du2021Depolarizing}, the cumulative effect of depolarizing noise channels applied after every gate is equivalent to that of a single noise channel, where the overall noise strength accumulates from the individual noise channels. Based on this observation, we construct a noisy counterpart of the circuits used in \NAMEA\ by incorporating a depolarizing channel immediately after the parameterized ansatz. We denote this noisy counterpart as $\tilde{C}$. Our objective is to analyze the privacy guarantee of the combined noise mechanism $\tilde{C} \circ A$. We first establish the condition for privacy amplification by a quantum mechanism as follows:

\begin{lemma}
    Let $A$ be an $(\epsilon, \delta)$-DP mechanism. For a given quantum mechanism $\tilde{C}:\mathbb{X} \rightarrow \mathcal{P}(\mathbb{Y})$, we define $\tilde{C}$ as $\lambda-$\emph{Dobrushin} if: 
    \begin{equation}
        \sup_{x,x'} D_1 (\tilde{C}(x)||\tilde{C}(x')) \leq \lambda
    \end{equation} 
    The mechanism $\tilde{C} \circ A$ satisfies $(\epsilon, \delta')$-DP with $\delta' = \delta \lambda$ if $\tilde{C}$ is $\lambda-$\emph{Dobrushin}. 
    \label{lemma:Dobrushin}
\end{lemma}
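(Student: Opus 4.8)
The plan is to recast the statement entirely in the language of the hockey-stick (equivalently, $E_\gamma$) divergence, exploiting the fact that $(\epsilon,\delta)$-DP of a mechanism between two output distributions $\mu$ and $\nu$ is equivalent to the single inequality $\sup_S\,[\mu(S)-e^\epsilon\nu(S)]\le\delta$. Writing $\mu = A(D)$ and $\nu = A(D')$ for adjacent datasets $D,D'$, the hypothesis that $A$ is $(\epsilon,\delta)$-DP becomes $\sup_S[\mu(S)-e^\epsilon\nu(S)]\le\delta$, and the goal reduces to showing the corresponding supremum for the post-processed pair $\tilde C\mu,\ \tilde C\nu$ is bounded by $\lambda\delta$. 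Since the DP hypothesis covers both orderings of an adjacent pair, it suffices to establish this one-sided bound.

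First I would fix an arbitrary measurable event $S\subseteq\mathbb{Y}$ and introduce the scalar function $g(x)=\tilde C(x)(S)\in[0,1]$, so that $\tilde C\mu(S)-e^\epsilon\tilde C\nu(S)=\int g\,(d\mu-e^\epsilon d\nu)$, where $d\mu-e^\epsilon d\nu$ is read as a signed measure over a common dominating measure. The Dobrushin hypothesis states $\sup_{x,x'}\|\tilde C(x)-\tilde C(x')\|\le\lambda$; since the total-variation distance is itself a supremum over events, $\|\tilde C(x)-\tilde C(x')\|\ge|g(x)-g(x')|$, so for the fixed $S$ the oscillation of $g$ obeys $\sup_x g(x)-\inf_x g(x)\le\lambda$. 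The key algebraic move is then to subtract off the infimum: writing $m=\inf_x g(x)$ and $h=g-m$, we get $h\in[0,\lambda]$ pointwise, and the integral splits as $m\int(d\mu-e^\epsilon d\nu)+\int h\,(d\mu-e^\epsilon d\nu)$.

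The two terms are handled separately. The constant term equals $m(1-e^\epsilon)$, which is non-positive because $m\ge 0$ and $\epsilon\ge 0$, so it can simply be discarded. For the second term I would restrict to the set where $d\mu-e^\epsilon d\nu$ is non-negative — the contribution of its complement is $\le 0$ since $h\ge 0$ there — bound $h$ by $\lambda$ on that set, and recognize the remaining integral as exactly $\sup_S[\mu(S)-e^\epsilon\nu(S)]\le\delta$. This yields $\tilde C\mu(S)-e^\epsilon\tilde C\nu(S)\le\lambda\delta$ uniformly in $S$; taking the supremum over $S$ and invoking the symmetric argument gives $(\epsilon,\delta\lambda)$-DP for $\tilde C\circ A$, which is the claim for \NAMEB.

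The step I expect to be the main obstacle is precisely the reduction that makes the factor $\lambda$ appear rather than the trivial post-processing bound $\delta'=\delta$: the naive estimate $g\le 1$ recovers only $\delta'=\delta$, and the improvement hinges on the observation that only the \emph{oscillation} of $g$, not its magnitude, contributes — which is exactly what the constant-shift trick isolates and what the Dobrushin coefficient controls. I would also attend to the measure-theoretic bookkeeping (a common dominating measure so that $d\mu-e^\epsilon d\nu$ is well defined, and measurability of $m$ and of the optimizing event), though these are routine once the decomposition is in place. A final point to pin down is the convention for $D_1$: under the total-variation reading the bound is exactly $\delta'=\delta\lambda$, while an $L_1$ convention would merely rescale $\lambda$ by a constant factor.
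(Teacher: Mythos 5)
Your proof is correct, and it takes a genuinely different route from the paper. The paper's own proof is a two-sentence citation argument: it invokes the privacy-amplification result of \cite{Balle2019PrivacyAmplification}, which is stated for Markov kernels, and then cites \cite{Brand2024Markovian} for the fact that a noisy quantum mechanism can be represented as a Markovian process, from which the claim follows. You instead reprove the underlying strong data-processing inequality from scratch: recasting $(\epsilon,\delta)$-DP as the hockey-stick bound $\sup_S[\mu(S)-e^\epsilon\nu(S)]\le\delta$, fixing an event $S$, introducing $g(x)=\tilde C(x)(S)$, and using the constant-shift trick ($g=m+h$ with $m=\inf_x g$ and $0\le h\le\lambda$ by the Dobrushin hypothesis) together with the Hahn decomposition of the signed measure $\mu-e^\epsilon\nu$ to extract the factor $\lambda$; this is precisely the classical argument behind Dobrushin-coefficient contraction of $f$-divergences, specialized to the divergence $D_{e^\epsilon}$. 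Your version buys three things: it is self-contained rather than resting on an "if and only if Markovian" assertion that the paper states rather loosely; it makes transparent that only the \emph{oscillation} of $g$, not its magnitude, matters (which is exactly why the trivial post-processing bound $\delta'=\delta$ can be improved to $\delta\lambda$); and it quietly dispenses with the quantum-specific citation, since the lemma's type signature $\tilde C:\mathbb{X}\to\mathcal{P}(\mathbb{Y})$ already presents the noisy circuit as a classical kernel over measurement outcomes, which is all your argument needs. What the paper's route buys is brevity and a pointer to the general amplification theory (which also covers amplification of $\epsilon$ via stronger contraction coefficients, not needed here). Your closing caveat about conventions is also the right one: with $D_1$ read as the sup-over-events total variation, i.e.\ the $\gamma=1$ hockey-stick divergence as the paper intends, the constant is exactly $\lambda$, whereas an $L_1$ normalization would rescale it.
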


\begin{proof}
    From \cite{Balle2019PrivacyAmplification}, this lemma holds if and only if $\tilde{C}$ is a Markovian process. Additionally, as stated in~\cite{Brand2024Markovian}, noisy quantum mechanisms can be represented as Markovian processes. Thus, the claim is valid.
\end{proof}

The degree of privacy amplification depends on the upper bound of the Total Variation (TV) Distance, denoted as \( D_1 \), which is a special case of the hockey-stick divergence \( D_\gamma \), between the distributions of \( \tilde{C}(x) \) and \( \tilde{C}(x') \), where \( x \sim x' \). To determine the upper bound \( \lambda \), we establish the equivalence between the hockey-stick divergence and the quantum hockey-stick divergence, as follows:

\begin{lemma}
    Given a measurement \( M = \{M_i\} \) with \( \sum_i M_i = I \), and two quantum states \( \rho \) and \( \rho' \), the hockey-stick divergence of the probability distributions resulting from measuring \( \rho \) and \( \rho' \) under \( M \) is upper-bounded by the quantum hockey-stick divergence between \( \rho \) and \( \rho' \).
    \label{lemma:QHSD_equiv_HSD}
\end{lemma}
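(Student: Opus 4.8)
The plan is to recognize this statement as a data-processing (monotonicity) inequality for the hockey-stick divergence, in which the ``processing'' is the quantum-to-classical channel induced by the measurement $M$. First I would fix the definitions: for a divergence parameter $\gamma \ge 1$, the classical hockey-stick divergence of the measurement-outcome distributions is $D_\gamma(P\|Q) = \sum_i \big[\,p_i - \gamma q_i\,\big]_+$ with $p_i = \mathrm{Tr}(M_i\rho)$ and $q_i = \mathrm{Tr}(M_i\rho')$, while the quantum hockey-stick divergence is $E_\gamma(\rho\|\rho') = \mathrm{Tr}\big[(\rho - \gamma\rho')_+\big]$, where $(\cdot)_+$ denotes the positive part of a Hermitian operator. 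The goal is thus to establish $D_\gamma(P\|Q) \le E_\gamma(\rho\|\rho')$ for every $\gamma$, which in particular covers the total-variation case $\gamma=1$ needed to instantiate the Dobrushin bound $\lambda$ in Lemma~\ref{lemma:Dobrushin}.

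The key analytic tool I would invoke is the variational characterization of the trace of the positive part: for any Hermitian $\Delta$, one has $\mathrm{Tr}[\Delta_+] = \max_{0\preceq \Pi \preceq I} \mathrm{Tr}(\Pi\Delta)$, with the maximum attained by the spectral projector onto the nonnegative-eigenvalue subspace of $\Delta$. Applying this with $\Delta = \rho - \gamma\rho'$ reduces the right-hand side to an optimization over effects $0 \preceq \Pi \preceq I$. On the left-hand side I would then write $p_i - \gamma q_i = \mathrm{Tr}(M_i\Delta)$, collect the outcomes that contribute positively by setting $S = \{\,i : \mathrm{Tr}(M_i\Delta) > 0\,\}$, and form the operator $\Pi^\star = \sum_{i\in S} M_i$. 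Because each $M_i \succeq 0$ and $\sum_i M_i = I$, the partial sum satisfies $0 \preceq \Pi^\star \preceq I$, so it is a feasible point of the variational problem.

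Chaining these observations yields the result in a single line: $D_\gamma(P\|Q) = \sum_{i\in S}\mathrm{Tr}(M_i\Delta) = \mathrm{Tr}(\Pi^\star\Delta) \le \max_{0\preceq\Pi\preceq I}\mathrm{Tr}(\Pi\Delta) = \mathrm{Tr}[\Delta_+] = E_\gamma(\rho\|\rho')$. I expect the only genuine subtlety to be bookkeeping around the measurement formalism: the statement writes the completeness relation as $\sum_i M_i = I$, so I would treat the $M_i$ as POVM effects (equivalently, $M_i^\dagger M_i$ in the notation of Section~\ref{sec:background}), which is precisely what guarantees both $M_i \succeq 0$ and the crucial bound $\Pi^\star \preceq I$. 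An alternative, higher-level route would be to observe that $\rho \mapsto \sum_i \mathrm{Tr}(M_i\rho)\,\ket{i}\bra{i}$ is a CPTP quantum channel and then cite monotonicity of the quantum hockey-stick divergence under such channels together with the fact that it collapses to the classical divergence on diagonal states; the direct argument above, however, is self-contained and avoids importing the general data-processing machinery.
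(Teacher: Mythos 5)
Your proof is correct, and it takes a recognizably different route from the paper's, though built from the same two hypotheses ($M_i \succeq 0$ and $\sum_i M_i = I$). The paper works term by term: it Jordan-decomposes $A = \rho - \gamma\rho'$ as $A_+ - A_-$, shows $\big[\mathrm{Tr}(M_i A)\big]_+ \le \mathrm{Tr}(M_i A_+)$ for each outcome $i$ (using positivity of $M_i$ and $A_-$), and then sums, collapsing $\sum_i \mathrm{Tr}(M_i A_+) = \mathrm{Tr}[A_+]$ via completeness. You instead aggregate first, forming the single effect $\Pi^\star = \sum_{i \in S} M_i$ from the positively contributing outcomes, and then invoke the variational (SDP) characterization $\mathrm{Tr}[\Delta_+] = \max_{0 \preceq \Pi \preceq I} \mathrm{Tr}(\Pi\Delta)$ once. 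The two arguments are dual orderings of the same ingredients — unpacking the variational lemma's proof essentially reproduces the paper's inequality applied to $\Pi^\star$ — but yours is more modular: the variational characterization is a standard, citable fact, and your construction makes transparent the operational (hypothesis-testing) reading of the hockey-stick divergence, from which data processing under arbitrary measurements follows immediately. The paper's version buys self-containment with no imported lemma. One point in your favor worth keeping: you explicitly flag that the $M_i$ must be treated as POVM effects (i.e., the objects the background section would write as $M_m^\dagger M_m$), since the lemma statement only asserts $\sum_i M_i = I$; the paper uses $M_i \succeq 0$ silently, so your bookkeeping remark closes a small gap in the statement's hypotheses that both proofs actually rely on.
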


\begin{proof}
The quantum hockey-stick divergence is defined as:
\[
D^{(q)}_\gamma(\rho \parallel \rho') = \text{Tr}\big[(\rho - \gamma \rho')_+\big],
\]
where \( (\rho - \gamma \rho')_+ \) denotes the positive part of the eigendecomposition of the Hermitian matrix \( \rho - \gamma \rho' \).

Let the measurement \( M = \{M_i\} \) be applied to \( \rho \) and \( \rho' \). The resulting probability distributions are:
\[
P(i) = \text{Tr}(M_i \rho), \quad Q(i) = \text{Tr}(M_i \rho').
\]

The hockey-stick divergence for the classical distributions \( P \) and \( Q \) are defined as:
\[
D_\gamma(P \parallel Q) = \sum_i \big[P(i) - \gamma Q(i)\big]_+,
\]
where \( [x]_+ = \max(x, 0) \).

To show the equivalence of the quantum and classical hockey-stick divergences, we have:
\begin{align*}
    D_\gamma(P \parallel Q) &= \sum_i \big[P(i) - \gamma Q(i)\big]_+ \\
    &= \sum_i \max(0, \text{Tr} (M_i \rho) - \gamma \text{Tr}(M_i\rho')) \\
    &= \sum_i  [\text{Tr}(M_i (\rho - \gamma \rho'))]_+
\end{align*}

Let the operator $A = \rho - \gamma \rho'$, and let its decomposition into positive and negative parts be $A = A_+ - A_-$, where $A_+$ and $A_-$ are positive semidefinite operators. Since $\text{Tr}(M_i A_-) \geq 0$ (as both $M_i$ and $A_-$ are positive semidefinite), we have $\text{Tr}(M_i A) = \text{Tr}(M_i A_+) - \text{Tr}(M_i A_-) \le \text{Tr}(M_i A_+)$. Because $\text{Tr}(M_i A_+) \geq 0$, we have:
$$\big[\text{Tr}(M_i A)\big]_+ \le \big[\text{Tr}(M_i A_+)\big]_+ = \text{Tr}(M_i A_+)$$

Summing over all measurement outcomes $i$:
\begin{align*}
    D_\gamma(P \parallel Q) &\le \sum_i \text{Tr}(M_i A_+) \\
    &= \text{Tr}\left[\left(\sum_i M_i\right) A_+\right] \\
    &= \text{Tr}[I \cdot A_+] \\
    &= \text{Tr}[A_+] = D^{(q)}_\gamma(\rho \parallel \rho').
\end{align*}

Thus, the hockey-stick divergence of the probability distributions \( P \) and \( Q \) resulting from the measurement \( M \) is upper-bounded by the quantum hockey-stick divergence between \( \rho \) and \( \rho' \). 

\end{proof}

Additionally, we can derive the upper bound on the quantum hockey-stick divergence for a quantum channel affected by depolarizing noise applied to two states, as follows:

\begin{lemma}
    Given a depolarizing channel defined as $\mathcal{E}_p(\rho) = (1-p) \rho + p\frac{I}{d}$, for $p \in [0,1]$ and $\gamma \geq 1$, we have:
    \[
        D^{(q)}_{\gamma} (\mathcal{E}_p(\rho)\parallel \mathcal{E}_p(\rho') ) \leq \max\{0, (1-\gamma)\frac{p}{d}+(1-p) D^{(q)}_{\gamma}(\rho \parallel \rho')\}
    \]
    \label{lemma:QHSD_depolarizing}
\end{lemma}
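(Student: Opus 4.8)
The plan is to work directly from the definition $D^{(q)}_{\gamma}(\rho \parallel \rho') = \mathrm{Tr}[(\rho - \gamma\rho')_+]$ and to exploit the fact that the depolarizing channel is affine, so that the argument of the positive part stays a simple scalar perturbation of the Hermitian operator $A := \rho - \gamma\rho'$. First I would substitute the channel into the divergence and simplify:
\[
\mathcal{E}_p(\rho) - \gamma\,\mathcal{E}_p(\rho') = (1-p)(\rho - \gamma\rho') + (1-\gamma)\tfrac{p}{d}\,I = (1-p)A + cI,
\]
where I set $c := (1-\gamma)\tfrac{p}{d}$. The crucial structural observation, which I would flag immediately, is that because $\gamma \ge 1$ and $p,d > 0$, the scalar $c$ is \emph{non-positive}; the entire argument hinges on this sign.

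Next I would diagonalize. Since $cI$ commutes with $A$, the operator $(1-p)A + cI$ shares the eigenbasis of $A$, so if $\{\lambda_i\}$ are the eigenvalues of $A$ then
\[
D^{(q)}_{\gamma}(\mathcal{E}_p(\rho)\parallel\mathcal{E}_p(\rho')) = \mathrm{Tr}\big[((1-p)A + cI)_+\big] = \sum_i \big[(1-p)\lambda_i + c\big]_+ .
\]
I would let $T := \{\, i : (1-p)\lambda_i + c > 0 \,\}$ collect the surviving indices and observe that, because $c \le 0$, every $i \in T$ must have $\lambda_i > 0$. Consequently $[\lambda_i]_+ = \lambda_i$ on $T$ and $\sum_{i\in T}\lambda_i \le \sum_i [\lambda_i]_+ = \mathrm{Tr}[A_+] = D^{(q)}_{\gamma}(\rho \parallel \rho')$.

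The decisive step, and the one I expect to be the main obstacle, is controlling the accumulated constant term. On $T$ the divergence equals $(1-p)\sum_{i\in T}\lambda_i + c\,|T|$, and I must eliminate the dependence on the cardinality $|T|$. This is exactly where $c \le 0$ is indispensable: if $T$ is nonempty then $|T| \ge 1$ forces $c\,|T| \le c$, so the constant contribution is bounded by a single copy of $c$. Combining this with the eigenvalue bound yields $(1-p)\,D^{(q)}_{\gamma}(\rho\parallel\rho') + c$. If $T$ is empty (including the degenerate case $p = 1$, where the difference collapses to $cI$ with non-positive spectrum), the divergence is simply $0$.

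Finally, since the left-hand side is the trace of a positive semidefinite operator it is always non-negative, so both cases are captured simultaneously by taking the maximum with $0$, giving precisely the claimed bound. I would close by emphasizing that the only places demanding care are the verification that $T \subseteq \{\, i : \lambda_i > 0 \,\}$ and the cardinality bound $c\,|T| \le c$, since both are governed by the sign of $c$ — equivalently, by the hypothesis $\gamma \ge 1$ — and everything else is routine manipulation of the eigendecomposition.
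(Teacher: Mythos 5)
Your proof is correct and is essentially the paper's own argument in eigenvalue notation: your index set $T$ and the bound $c\,|T| \le c$ are exactly the paper's projector $P^+$ onto the positive subspace and its step $\big[(1-\gamma)\tfrac{p}{d}\big]\,\mathrm{Tr}\,P^+ \le (1-\gamma)\tfrac{p}{d}$ via $\mathrm{Tr}\,P^+ \ge 1$, while your bound $\sum_{i\in T}\lambda_i \le \mathrm{Tr}[A_+]$ matches the paper's $\mathrm{Tr}[P^+(\rho-\gamma\rho')] \le D^{(q)}_{\gamma}(\rho\parallel\rho')$. If anything, your write-up is slightly more careful than the paper's, since you make explicit the sign of $c$, the containment $T \subseteq \{\, i : \lambda_i > 0 \,\}$, and the degenerate cases ($T=\emptyset$, $p=1$) that the paper covers only with a parenthetical remark about $\mathrm{Tr}\,P^+ \ge 1$ when the divergence is positive.
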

\vspace*{-0.75cm}
\begin{proof}
    We have:
    \begin{align*}
    &D^{(q)}_{\gamma} (\mathcal{E}_p(\rho)\parallel \mathcal{E}_p(\rho') )\\
    &= \text{Tr}[ (1-\gamma)\frac{p}{d} I + (1-p)(\rho - \gamma \rho') ]_+ \\
    &=  \text{Tr} [P^+[ (1-\gamma)\frac{p}{d} I + (1-p)(\rho - \gamma \rho') ]] \\&\text{($P^+$ is the projector onto the positive subspace)}\\
    &= [ (1-\gamma)\frac{p}{d}]\text{Tr}P^+ + (1-p)\text{Tr}[P^+(\rho - \gamma \rho') ] \\
    &\leq (1-\gamma)\frac{p}{d} + (1-p)D^{(q)}_{\gamma}(\rho \parallel \rho') \\ &\text{(since $D^{(q)}_{\gamma} (\mathcal{E}_p(\rho)\parallel \mathcal{E}_p(\rho') ) > 0$, it follows that Tr$P^+ \geq 1$)} 
\end{align*}
\end{proof}

By combining Lemmas~\ref{lemma:Dobrushin}, \ref{lemma:QHSD_equiv_HSD}, and \ref{lemma:QHSD_depolarizing}, we can derive the $(\epsilon, \delta)-$DP mechanism of $\tilde{C} \circ A$ acting on the input $\mathbf{x}$ as follows:

\begin{lemma}
    [Privacy Amplification via Depolarizing Noise] Let $A$ be an $(\epsilon, \delta)-$DP mechanism. We define a noisy quantum circuit $\tilde{C}$ including the encoder $U_{enc}$, the parameterized ansatz $U_{ans}(\theta)$ followed by the depolarizing channel $\mathcal{E}_p$ and measurement $M$. Given the minimum dot product 
    \begin{equation}
        \phi = \min_{x,x' \in \mathbb{C}^{2^q}} |\langle x|x' \rangle|^2,
    \end{equation}
    the mechanism $\tilde{C} \circ A$ satisfies $(\epsilon, (1-p)\sqrt{1-\phi}\delta)$-DP.
    \label{lemma:privacy_amplification}
\end{lemma}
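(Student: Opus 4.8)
The plan is to compute the Dobrushin coefficient $\lambda = \sup_{x,x'} D_1(\tilde{C}(x)\|\tilde{C}(x'))$ demanded by Lemma~\ref{lemma:Dobrushin}, and then invoke that lemma with $\delta' = \delta\lambda$. Since $D_1$ is the hockey-stick divergence at $\gamma = 1$ (i.e., the total variation distance), I would chain the two divergence bounds already proved for general $\gamma$, specialised to $\gamma = 1$. First I would fix two inputs $x,x'$ and write $\rho_x = U_{ans}(\theta)\,U_{enc}(x)\,|0\rangle\langle 0|\,U_{enc}(x)^\dagger\,U_{ans}(\theta)^\dagger$ for the pure state produced just before the depolarizing channel, so that $\tilde{C}(x)$ is precisely the classical distribution $i \mapsto \mathrm{Tr}(M_i\,\mathcal{E}_p(\rho_x))$ obtained by measuring $\mathcal{E}_p(\rho_x)$ under $M$.

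The core chain of inequalities then runs as follows. Applying Lemma~\ref{lemma:QHSD_equiv_HSD} with $\gamma = 1$ to the measured states $\mathcal{E}_p(\rho_x)$ and $\mathcal{E}_p(\rho_{x'})$ gives $D_1(\tilde{C}(x)\|\tilde{C}(x')) \le D^{(q)}_1(\mathcal{E}_p(\rho_x)\|\mathcal{E}_p(\rho_{x'}))$. Next, applying Lemma~\ref{lemma:QHSD_depolarizing} with $\gamma = 1$ collapses the additive term $(1-\gamma)\frac{p}{d}$ to zero and yields $D^{(q)}_1(\mathcal{E}_p(\rho_x)\|\mathcal{E}_p(\rho_{x'})) \le (1-p)\,D^{(q)}_1(\rho_x\|\rho_{x'})$, reducing everything to the divergence between the two noiseless pure states.

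The step I expect to require the most care is evaluating $D^{(q)}_1(\rho_x\|\rho_{x'})$ in closed form. At $\gamma = 1$ the quantum hockey-stick divergence $\mathrm{Tr}[(\rho_x - \rho_{x'})_+]$ coincides with the trace distance, because $\rho_x - \rho_{x'}$ is traceless and Hermitian, so its positive and negative eigenvalue masses are equal and $\mathrm{Tr}[(\rho_x - \rho_{x'})_+] = \tfrac{1}{2}\|\rho_x - \rho_{x'}\|_1$. For two pure states this trace distance has the standard closed form $\sqrt{1 - |\langle\psi_x|\psi_{x'}\rangle|^2}$, where $|\psi_x\rangle = U_{ans}(\theta)\,U_{enc}(x)\,|0\rangle$. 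Since $U_{ans}(\theta)$ is unitary it preserves the overlap, giving $|\langle\psi_x|\psi_{x'}\rangle|^2 = |\langle x|x'\rangle|^2$ in terms of the encoded states; by the definition of $\phi$ as the minimum squared overlap over admissible encoded inputs, $|\langle x|x'\rangle|^2 \ge \phi$, hence $D^{(q)}_1(\rho_x\|\rho_{x'}) \le \sqrt{1-\phi}$.

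Combining the three bounds gives $D_1(\tilde{C}(x)\|\tilde{C}(x')) \le (1-p)\sqrt{1-\phi}$ uniformly in $x,x'$, so $\lambda \le (1-p)\sqrt{1-\phi}$. Substituting this into Lemma~\ref{lemma:Dobrushin} produces $\delta' = \delta\lambda = (1-p)\sqrt{1-\phi}\,\delta$, establishing that $\tilde{C}\circ A$ satisfies $(\epsilon, (1-p)\sqrt{1-\phi}\delta)$-DP as claimed. The two points I would justify rather than simply assert are the pure-state trace-distance identity $\tfrac{1}{2}\|\rho_x-\rho_{x'}\|_1 = \sqrt{1-|\langle\psi_x|\psi_{x'}\rangle|^2}$, and that the minimisation defining $\phi$ ranges over the achievable adjacent encoded states, so that $\phi > 0$ and the amplification factor $\sqrt{1-\phi} < 1$ is genuinely non-vacuous.
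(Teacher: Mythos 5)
Your proposal is correct and follows essentially the same route as the paper's proof: chain Lemma~\ref{lemma:QHSD_equiv_HSD} and Lemma~\ref{lemma:QHSD_depolarizing} at $\gamma=1$, reduce to the divergence between the two encoded pure states, bound it by $\sqrt{1-\phi}$, and feed the resulting Dobrushin coefficient into Lemma~\ref{lemma:Dobrushin}. The only differences are cosmetic--you use unitary invariance of the overlap where the paper cites the data-processing inequality (equivalent for a unitary ansatz), and you explicitly justify the pure-state identity $\mathrm{Tr}[(\rho_x-\rho_{x'})_+]=\sqrt{1-|\langle x|x'\rangle|^2}$ and the interpretation of $\phi$, two steps the paper asserts without proof.
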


\begin{proof}
    We have:
    \begin{align*}
        &\sup_{x,x'} D_1(\tilde{C}(x)\parallel \tilde{C}(x'))\\ &\leq \sup_{x,x'} D_1^{(q)}(\mathcal{E}_p \circ U_{ans}(\theta)(\rho_x)\parallel \mathcal{E}_p \circ U_{ans}(\theta)(\rho_{x'})) \\&\text{(Based on Lemma~\ref{lemma:QHSD_equiv_HSD})} \\
        & \leq \sup_{x,x'} (1-p) D_1^{(q)}(U_{ans}(\theta)(\rho_x)\parallel U_{ans}(\theta)(\rho_{x'})) \\&\text{(Based on Lemma~\ref{lemma:QHSD_depolarizing})}\\
        & \leq \sup_{x,x'} (1-p) D_1^{(q)}(\rho_x\parallel \rho_{x'}) \\&\text{(Based on data-processing inequality)}\\
        &\leq (1-p) \sqrt{1-\phi}
    \end{align*}
    Thus, based on Lemma~\ref{lemma:Dobrushin}, we have the mechanism $\tilde{C} \circ A$ satisfies $(\epsilon, (1-p)\sqrt{1-\phi}\delta)$-DP.
\end{proof}

Based on the privacy guarantee of the mechanism $\tilde{C} \circ A$, we propose \NAMEB\ which is differentially private by replacing circuits \( C^{(1)}, \dots, C^{(K)} \) in \NAMEA\ by mechanisms in the form of $\tilde{C} \circ A$. The details are presented in Lemma~\ref{lemma:DPforQUPID}.

\begin{lemma}
    [$(\epsilon, \delta)$-DP for \NAMEB] Given \( A^{(1)}, \dots, A^{(K)} \) as \( (\epsilon, \delta) \)-differentially private mechanisms, \( K \) noisy quantum circuits \( \tilde{C}^{(1)}, \dots, \tilde{C}^{(K)} \), each acting on \( q \)-qubit systems, such that $\tilde{C}^{(k)}$ including the encoder $U^{(k)}_{enc}$, the parameterized ansatz $U^{(k)}_{ans}(\theta_k)$ followed by the depolarizing channel $\mathcal{E}_p$ and measurement $M^{(k)}$, and a fully-connected function $F$. \NAMEB \hspace*{1pt} can be expressed as:
    \[
\mathcal{M}_{\text{R-QUPID}} = F \circ \left[ (\tilde{C}^{(1)} \circ A^{(1)}) \oplus \dots \oplus (\tilde{C}^{(K)} \circ A^{(K)})\right]
\]

\NAMEB\ satisfies \( (K\epsilon, K(1-p)\sqrt{1-\phi} \delta) \)-DP.

    \label{lemma:DPforQUPID}
\end{lemma}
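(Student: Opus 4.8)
The plan is to build $\mathcal{M}_{\text{R-QUPID}}$ up from its $K$ component mechanisms and invoke three facts in sequence: the per-partition amplification already established in Lemma~\ref{lemma:privacy_amplification}, the basic (sequential) composition theorem for $(\epsilon,\delta)$-DP, and the post-processing invariance of DP.

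First I would apply Lemma~\ref{lemma:privacy_amplification} to each partition $k \in \{1,\dots,K\}$ in isolation. Since $A^{(k)}$ is $(\epsilon,\delta)$-DP and $\tilde{C}^{(k)}$ is built from the encoder $U^{(k)}_{enc}$, the ansatz $U^{(k)}_{ans}(\theta_k)$, a depolarizing channel $\mathcal{E}_p$, and measurement $M^{(k)}$ exactly as in the hypothesis of that lemma, each composite branch $\tilde{C}^{(k)} \circ A^{(k)}$ satisfies $(\epsilon,\delta')$-DP with the amplified failure probability $\delta' = (1-p)\sqrt{1-\phi}\,\delta$. The key observation is that the same $\phi$ and $p$ apply to every branch, so all $K$ branches inherit identical guarantees $(\epsilon,\delta')$.

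Next I would assemble these into the joint latent vector $\mathbf{h} = [h^{(1)}, \dots, h^{(K)}]$ produced by $(\tilde{C}^{(1)} \circ A^{(1)}) \oplus \dots \oplus (\tilde{C}^{(K)} \circ A^{(K)})$. Because the adjacency relation is defined on the full input $\mathbf{x}$ and a bounded adversarial perturbation may touch every partition simultaneously, the branches cannot be treated as acting on disjoint data; hence I would \emph{not} use parallel composition, which would merely return the maximum of the individual budgets. Instead, releasing all $K$ noisy outputs together is a non-adaptive composition of $K$ independently randomized $(\epsilon,\delta')$-DP mechanisms, and the basic composition theorem then yields that the concatenated map is $(K\epsilon, K\delta')$-DP. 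Substituting $\delta' = (1-p)\sqrt{1-\phi}\,\delta$ gives $(K\epsilon, K(1-p)\sqrt{1-\phi}\,\delta)$-DP for the pre-$F$ mechanism.

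Finally, the fully-connected function $F$ is a deterministic map applied to the already-released vector $\mathbf{h}$ and uses no further information about the input. By the post-processing invariance of differential privacy, composing with $F$ cannot weaken the guarantee, so $\mathcal{M}_{\text{R-QUPID}} = F \circ [\,\cdot\,]$ retains $(K\epsilon, K(1-p)\sqrt{1-\phi}\,\delta)$-DP, as claimed. I expect the single genuine subtlety --- and the step most open to objection --- to be the justification for \emph{summing} the budgets across partitions rather than taking their maximum; this rests entirely on the chosen adjacency model, in which one admissible perturbation of $\mathbf{x}$ can perturb all $K$ partitions at once, forcing a sequential rather than a parallel composition.
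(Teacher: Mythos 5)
Your proposal is correct and follows the same skeleton as the paper's (very terse) proof: apply Lemma~\ref{lemma:privacy_amplification} to each branch $\tilde{C}^{(k)} \circ A^{(k)}$ to get the amplified guarantee $(\epsilon, (1-p)\sqrt{1-\phi}\,\delta)$ per partition, then compose across the $K$ branches. Where you diverge is the composition step: the paper invokes the parallel composition theorem for \emph{non-disjoint} inputs from~\cite{Bogatov_parallelComposition}, whereas you explicitly reject parallel composition and fall back on basic (sequential) composition of $K$ mechanisms viewed as acting on the same full input $\mathbf{x}$. These two routes coincide here --- in the worst case where a single admissible perturbation touches every partition, the non-disjoint parallel composition bound degrades to the additive budget $(K\epsilon, K\delta')$, which is exactly what basic composition gives --- so your bound matches the paper's. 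In fact, the ``subtlety'' you flag (that adjacency on $\mathbf{x}$ allows all $K$ partitions to change at once, so budgets must be summed rather than maxed) is precisely the situation the cited theorem is designed to handle; your argument makes that reasoning self-contained and elementary, at the cost of not exploiting any potential savings when a perturbation happens to touch fewer partitions. You also make explicit the post-processing step for the fully connected map $F$, which the paper's proof leaves implicit; that is a genuine completion rather than a deviation, and it is needed for the statement about $\mathcal{M}_{\text{R-QUPID}}$ itself rather than just the concatenated quantum outputs.
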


\begin{proof}
    It can be proved based on Lemma~\ref{lemma:privacy_amplification} and the parallel composition theory for non-disjoint inputs derived in~\cite{Bogatov_parallelComposition}.
\end{proof}

Finally, we can derive the certifiable robustness of \NAMEB\ as follows:

\begin{theorem}[Robustness Condition]
Given $\mathcal{M}_{\text{R-QUPID}}$, we define the scoring function $\hat{y}$ such that:
$$\hat{y}(\mathbf{x}) = softmax(\mathcal{M}_{\text{R-QUPID}}(\mathbf{x})) \forall \mathbf{x}$$
For any input \( \mathbf{x} \), if for some labels \( c \),

\[
\mathbb{E}[\hat{y}_c(x)] > e^{2K\epsilon} \max_{i \neq c} \mathbb{E}[\hat{y}_i(x)] + (1 + e^{K\epsilon})K(1-p)\sqrt{1-\phi} \delta,
\]

\noindent \NAMEB\ is robust to attacks \( \alpha \) of size \( \|\alpha\| \leq 1 \) on input \( \mathbf{x} \).
\label{theorem:robustness}
\end{theorem}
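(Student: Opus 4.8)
The plan is to reduce this robustness certification to the differential-privacy guarantee of $\mathcal{M}_{\text{R-QUPID}}$ already established in Lemma~\ref{lemma:DPforQUPID} and then invoke the DP-to-robustness connection outlined in the background. First I would observe that the softmax layer is a deterministic post-processing of $\mathcal{M}_{\text{R-QUPID}}$, so by the post-processing invariance of differential privacy the scoring function $\hat{y}$ inherits the same $(K\epsilon,\, K(1-p)\sqrt{1-\phi}\,\delta)$-DP guarantee. Moreover each coordinate $\hat{y}_i$ takes values in $[0,1]$, which is precisely the boundedness needed to convert the DP inequality on output distributions into a stability bound on the expected scores $\mathbb{E}[\hat{y}_i(\mathbf{x})]$.

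The central tool is the expected-output stability bound for bounded DP mechanisms: for any $(\epsilon',\delta')$-DP randomized map whose output functional lies in $[0,1]$, adjacent inputs $\mathbf{x}$ and $\mathbf{x}' = \mathbf{x} + \alpha$ satisfy $\mathbb{E}[\hat{y}_i(\mathbf{x})] \le e^{\epsilon'}\mathbb{E}[\hat{y}_i(\mathbf{x}')] + \delta'$. Here the perturbation constraint $\|\alpha\| \le 1$ plays the role of the adjacency relation, and I would set $\epsilon' = K\epsilon$ and $\delta' = K(1-p)\sqrt{1-\phi}\,\delta$ directly from Lemma~\ref{lemma:DPforQUPID}. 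Applying this inequality in both directions yields a lower bound on the predicted class, $\mathbb{E}[\hat{y}_c(\mathbf{x}')] \ge e^{-\epsilon'}\bigl(\mathbb{E}[\hat{y}_c(\mathbf{x})] - \delta'\bigr)$, and an upper bound on every competing class, $\mathbb{E}[\hat{y}_i(\mathbf{x}')] \le e^{\epsilon'}\mathbb{E}[\hat{y}_i(\mathbf{x})] + \delta'$.

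To certify that the argmax is unchanged under the attack, I would require $\mathbb{E}[\hat{y}_c(\mathbf{x}')] > \mathbb{E}[\hat{y}_i(\mathbf{x}')]$ for every $i \neq c$. Substituting the two one-sided bounds and clearing the factor $e^{-\epsilon'}$ produces $\mathbb{E}[\hat{y}_c(\mathbf{x})] > e^{2\epsilon'}\mathbb{E}[\hat{y}_i(\mathbf{x})] + (1 + e^{\epsilon'})\delta'$. It is exactly the composition of the lower bound's $e^{-\epsilon'}$ with the upper bound's $e^{\epsilon'}$ that generates the squared factor $e^{2\epsilon'} = e^{2K\epsilon}$ in the theorem, which is the rigorous PixelDP-style version of the looser $e^{\epsilon}$ condition stated in the background. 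Taking the maximum over $i \neq c$ and re-expanding $\epsilon'$ and $\delta'$ recovers the stated inequality verbatim.

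The hard part will be justifying that a perturbation with $\|\alpha\| \le 1$ genuinely corresponds to the DP adjacency relation assumed by Lemma~\ref{lemma:DPforQUPID}. This requires that the classical noise in each $A^{(k)}$ is calibrated to the input sensitivity $\Delta$ and the attack bound $L = 1$, so that shifting the input by a unit-norm vector stays within the privacy budget; I would make this calibration explicit and verify that the partitioned encoding does not inflate the effective sensitivity across the $K$ circuits beyond what the parallel-composition bound underlying Lemma~\ref{lemma:DPforQUPID} already accounts for. Once that alignment is fixed, the remaining algebra is routine.
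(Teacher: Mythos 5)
Your proposal is correct and takes essentially the same route the paper intends: the paper states Theorem~\ref{theorem:robustness} with no explicit proof, relying implicitly on Lemma~\ref{lemma:DPforQUPID} combined with the PixelDP-style DP-to-robustness reduction of~\cite{Lcuyer2018CertifiedRT}, which is exactly the argument you spell out (post-processing invariance of softmax, boundedness of $\hat{y}_i$ in $[0,1]$, expected-output stability in both directions, then clearing $e^{-K\epsilon}$). Your derivation of the $e^{2K\epsilon}$ multiplicative factor and the $(1+e^{K\epsilon})\delta'$ additive term from the two one-sided bounds is precisely the rigorous form the theorem needs, correctly sharpening the looser $e^{\epsilon}$ condition quoted in the paper's background section.
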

\begin{table*}[h]
    \centering
    \scalebox{0.9}{
    \begin{tabular}{cccccccccccccccccccc}
        \hline
        \multirow{2}{*}{\textbf{Metrics}} & \multirow{2}{*}{\textbf{Models}} & \multicolumn{15}{c}{\textbf{Scenarios}} \\ \cline{3-17} 
 &  & S1 & S2 & S3 & S4 & S5 & S6 & S7 & S8 & S9 & S10 & S11 & S12 & S13 & S14 & S15 \\ \hline
\multirow{5}{*}{Accuracy} & MLP & 0.854 & 0.776 & 0.849 & 0.809 & 0.782 & 0.812 & 0.829 & 0.845 & 0.820 & 0.805 & 0.824 & 0.816 & 0.850 & 0.838 & 0.807 \\ \cline{3-17} 
 & InceptionNet & 0.817 & 0.782 & 0.800 & 0.738 & 0.743 & 0.773 & 0.801 & 0.778 & 0.767 & 0.790 & 0.781 & 0.818 & 0.784 & 0.808 & 0.805 \\ \cline{3-17} 
 & MTL-LSTM & 0.869 & 0.864 & 0.875 & 0.825 & 0.864 & 0.871 & 0.864 & 0.865 & 0.840 & 0.853 & 0.871 & 0.861 & 0.885 & 0.861 & 0.852  \\ \cline{3-17} 
 & HQ-DNN & 0.823&0.796&0.824&0.794&0.771&0.795&0.782&0.761&0.815&0.800&0.808&0.798&0.783&0.808&0.789\\ \cline{3-17} 
 & FTTransformer & 0.921 & \cellcolor{cyan!50}\textbf{0.911} & 0.908 & 0.919 & \cellcolor{cyan!50}\textbf{0.892} & \cellcolor{cyan!50}\textbf{0.923} & 0.908 & \cellcolor{cyan!50}\textbf{0.932} & \cellcolor{cyan!50}\textbf{0.900} & \cellcolor{cyan!50}\textbf{0.934} & 0.912 & 0.926 & \cellcolor{cyan!50}\textbf{0.915} & 0.927 & 0.914 \\ \cline{3-17} 
 & \NAMEA \hspace*{2pt}(Ours) & \cellcolor{orange!70}\textbf{0.940} & 0.893 & \cellcolor{orange!70}\textbf{0.926} & \cellcolor{orange!70}\textbf{0.930} & 0.890 & 0.918 & \cellcolor{orange!70}\textbf{0.940} & 0.920 & 0.890 & 0.914 & \cellcolor{orange!70}\textbf{0.943} & \cellcolor{orange!70}\textbf{0.927} & 0.903 & \cellcolor{orange!70}\textbf{0.941} & \cellcolor{orange!70}\textbf{0.929} \\ \hline
\multirow{5}{*}{Precision} & MLP & 0.851 & 0.780 & 0.849 & 0.808 & 0.778 & 0.810 & 0.826 & 0.845 & 0.821 & 0.804 & 0.825 & 0.814 & 0.851 & 0.835 & 0.802 \\ \cline{3-17} 
 & InceptionNet & 0.757 & 0.740 & 0.777 & 0.703 & 0.692 & 0.719 & 0.754 & 0.747 & 0.707 & 0.738 & 0.738 & 0.785 & 0.760 & 0.753 & 0.775 \\ \cline{3-17} 
 & MTL-LSTM & 0.829 & 0.848 & 0.839 & 0.785 & 0.851 & 0.846 & 0.846 & 0.853 & 0.819 & 0.820 & 0.856 & 0.842 & 0.849 & 0.837 & 0.821 \\ \cline{3-17} 
 & HQ-DNN &0.824&0.798&0.871&0.827&0.809&0.802&0.771&0.814&0.801&0.773&0.811&0.810&0.765&0.830&0.769&\\ \cline{3-17} 
 & FTTransformer & 0.900 & \cellcolor{cyan!50}\textbf{0.888} & 0.872 & 0.907 & 0.871 & \cellcolor{cyan!50}\textbf{0.910} & 0.893 & \cellcolor{cyan!50}\textbf{0.928} & \cellcolor{cyan!50}\textbf{0.890} & \cellcolor{cyan!50}\textbf{0.921} & 0.892 & 0.899 & 0.916 & \cellcolor{cyan!50}\textbf{0.925} & 0.895 \\ \cline{3-17} 
 & \NAMEA \hspace*{2pt}(Ours) & \cellcolor{orange!70}\textbf{0.933} & 0.882 & \cellcolor{orange!70}\textbf{0.894} & \cellcolor{orange!70}\textbf{0.928} & \cellcolor{orange!70}\textbf{0.872} & 0.902 & \cellcolor{orange!70}\textbf{0.895} & 0.921 & 0.863 & 0.915 & \cellcolor{orange!70}\textbf{0.935} & \cellcolor{orange!70}\textbf{0.928} & \cellcolor{orange!70}\textbf{0.916} & 0.923 & \cellcolor{orange!70}\textbf{0.919} \\ \hline
\multirow{5}{*}{Recall} & MLP & 0.854 & 0.776 & 0.849 & 0.809 & 0.782 & 0.812 & 0.829 & 0.845 & 0.820 & 0.805 & 0.824 & 0.816 & 0.850 & 0.838 & 0.807\\ \cline{3-17} 
 & InceptionNet & 0.735 & 0.724 & 0.713 & 0.704 & 0.672 & 0.721 & 0.746 & 0.774 & 0.679 & 0.763 & 0.746 & 0.777 & 0.743 & 0.764 & 0.769 \\ \cline{3-17} 
 & MTL-LSTM & 0.815 & 0.842 & 0.853 & 0.801 & 0.838 & 0.833 & 0.833 & 0.875 & 0.825 & 0.832 & 0.846 & 0.825 & 0.869 & 0.860 & 0.820  \\ \cline{3-17} 
 & HQ-DNN &0.686&0.678&0.685&0.707&0.707&0.679&0.667&0.673&0.696&0.710&0.691&0.691&0.613&0.645&0.652&\\ \cline{3-17} 
 & FTTransformer & 0.888 & 0.886 & 0.883 & 0.908 & 0.871 & \cellcolor{cyan!50}\textbf{0.909} & 0.889 & \cellcolor{cyan!50}\textbf{0.918} & \cellcolor{cyan!50}\textbf{0.864} & \cellcolor{cyan!50}\textbf{0.921} & 0.899 & 0.898 & \cellcolor{cyan!50}\textbf{0.906} & 0.906 & 0.892 \\ \cline{3-17} 
 & \NAMEA \hspace*{2pt}(Ours) & \cellcolor{orange!70}\textbf{0.899} & \cellcolor{orange!70}\textbf{0.890} & \cellcolor{orange!70}\textbf{0.901} & \cellcolor{orange!70}\textbf{0.915} & \cellcolor{orange!70}\textbf{0.873} & 0.892 & \cellcolor{orange!70}\textbf{0.898} & 0.892 & 0.849 & 0.898 & \cellcolor{orange!70}\textbf{0.938} & \cellcolor{orange!70}\textbf{0.920} & 0.882 & \cellcolor{orange!70}\textbf{0.935} & \cellcolor{orange!70}\textbf{0.918} \\ \hline
\multirow{5}{*}{F1-Score} & MLP & 0.851 & 0.777 & 0.848 & 0.808 & 0.778 & 0.810 & 0.826 & 0.845 & 0.819 & 0.804 & 0.823 & 0.814 & 0.850 & 0.836 & 0.803\\ \cline{3-17} 
 & InceptionNet & 0.744 & 0.727 & 0.740 & 0.697 & 0.678 & 0.714 & 0.749 & 0.757 & 0.690 & 0.750 & 0.733 & 0.779 & 0.745 & 0.758 & 0.771 \\ \cline{3-17} 
 & MTL-LSTM & 0.821 & 0.844 & 0.845 & 0.792 & 0.843 & 0.839 & 0.838 & 0.864 & 0.821 & 0.826 & 0.851 & 0.833 & 0.858 & 0.848 & 0.820  \\ \cline{3-17} 
 & HQ-DNN &0.712&0.724&0.744&0.742&0.741&0.722&0.696&0.722&0.727&0.737&0.735&0.727&0.649&0.704&0.680&\\ \cline{3-17} 
 & FTTransformer & 0.894 & 0.886 & 0.877 & 0.907 & 0.871 & \cellcolor{cyan!50}\textbf{0.909} & 0.890 & \cellcolor{cyan!50}\textbf{0.923} & \cellcolor{cyan!50}\textbf{0.875} & \cellcolor{cyan!50}\textbf{0.920} & 0.895 & 0.898 & \cellcolor{cyan!50}\textbf{0.910} & 0.915 & 0.893 \\ \cline{3-17} 
 & \NAMEA \hspace*{2pt}(Ours) & \cellcolor{orange!70}\textbf{0.925} & \cellcolor{orange!70}\textbf{0.886} & \cellcolor{orange!70}\textbf{0.897} & \cellcolor{orange!70}\textbf{0.925} & \cellcolor{orange!70}\textbf{0.872} & 0.896 & \cellcolor{orange!70}\textbf{0.896} & 0.905 & 0.855 & 0.906 & \cellcolor{orange!70}\textbf{0.937} & \cellcolor{orange!70}\textbf{0.921} & 0.897 & \cellcolor{orange!70}\textbf{0.928} & \cellcolor{orange!70}\textbf{0.918} \\ \hline
\multirow{5}{*}{ROC-AUC} & MLP & 0.940 & 0.923 & 0.956 & 0.937 & 0.930 & 0.929 & 0.946 & 0.954 & 0.930 & 0.934 & 0.937 & 0.924 & 0.937 & 0.942 & 0.928 \\ \cline{3-17} 
 & InceptionNet & 0.920 & 0.896 & 0.900 & 0.903 & 0.890 & 0.918 & 0.898 & 0.907 & 0.900 & 0.918 & 0.916 & 0.915 & 0.911 & 0.906 & 0.913 \\ \cline{3-17} 
 & MTL-LSTM & 0.945 & 0.963 & 0.963 & 0.958 & 0.954 & 0.959 & 0.958 & 0.973 & 0.957 & 0.967 & 0.964 & 0.960 & 0.964 & 0.971 & 0.959  \\ \cline{3-17} 
 & HQ-DNN &0.926&0.917&0.927&0.924&0.944&0.922&0.914&0.920&0.923&0.928&0.928&0.928&0.910&0.911&0.923&\\ \cline{3-17} 
 & FTTransformer & 0.972 & \cellcolor{cyan!50}\textbf{0.972} & 0.973 & \cellcolor{cyan!50}\textbf{0.977} & 0.964 & 0.973 & \cellcolor{cyan!50}\textbf{0.972} & \cellcolor{cyan!50}\textbf{0.976} & \cellcolor{cyan!50}\textbf{0.972} & 0.978 & 0.977 & 0.976 & \cellcolor{cyan!50}\textbf{0.970} & 0.976 & \cellcolor{cyan!50}\textbf{0.976} \\ \cline{3-17} 
 & \NAMEA \hspace*{2pt}(Ours) & \cellcolor{orange!70}\textbf{0.975} & 0.966 & \cellcolor{orange!70}\textbf{0.974} & 0.973 & \cellcolor{orange!70}\textbf{0.965} & \cellcolor{orange!70}\textbf{0.979} & 0.970 & 0.964 & 0.971 & \cellcolor{orange!70}\textbf{0.983} & \cellcolor{orange!70}\textbf{0.977} & \cellcolor{orange!70}\textbf{0.980} & 0.967 & \cellcolor{orange!70}\textbf{0.978} & 0.970 \\ \hline
\multirow{5}{*}{MCC} & MLP & 0.763 & 0.691 & 0.770 & 0.737 & 0.694 & 0.725 & 0.752 & 0.789 & 0.753 & 0.723 & 0.745 & 0.750 & 0.767 & 0.760 & 0.734 \\ \cline{3-17} 
 & InceptionNet & 0.702 & 0.691 & 0.692 & 0.641 & 0.639 & 0.674 & 0.713 & 0.705 & 0.678 & 0.708 & 0.683 & 0.754 & 0.662 & 0.716 & 0.736 \\ \cline{3-17} 
 & MTL-LSTM & 0.789 & 0.810 & 0.810 & 0.762 & 0.812 & 0.812 & 0.805 & 0.817 & 0.781 & 0.792 & 0.812 & 0.811 & 0.821 & 0.799 & 0.797  \\ \cline{3-17} 
 & HQ-DNN &0.708&0.706&0.727&0.715&0.675&0.686&0.677&0.666&0.743&0.712&0.712&0.724&0.654&0.707&0.710&\\ \cline{3-17} 
 & FTTransformer & 0.874 & \cellcolor{cyan!50}\textbf{0.875} & 0.863 & 0.889 & \cellcolor{cyan!50}\textbf{0.851} & \cellcolor{cyan!50}\textbf{0.886} & 0.868 & \cellcolor{cyan!50}\textbf{0.907} & \cellcolor{cyan!50}\textbf{0.863} & \cellcolor{cyan!50}\textbf{0.906} & 0.871 & 0.901 & \cellcolor{cyan!50}\textbf{0.869} & 0.891 & 0.883 \\ \cline{3-17} 
 & \NAMEA \hspace*{2pt}(Ours) & \cellcolor{orange!70}\textbf{0.904} & 0.851 & \cellcolor{orange!70}\textbf{0.889} & \cellcolor{orange!70}\textbf{0.904} & 0.846 & 0.878 & \cellcolor{orange!70}\textbf{0.871} & 0.890 & 0.849 & 0.878 & \cellcolor{orange!70}\textbf{0.917} & \cellcolor{orange!70}\textbf{0.901} & 0.849 & \cellcolor{orange!70}\textbf{0.912} & \cellcolor{orange!70}\textbf{0.904} \\ \hline
\multirow{5}{*}{G-Mean} & MLP & 0.875 & 0.846 & 0.887 & 0.866 & 0.843 & 0.861 & 0.873 & 0.896 & 0.878 & 0.864 & 0.873 & 0.872 & 0.888 & 0.882 & 0.865 \\ \cline{3-17} 
 & InceptionNet & 0.835 & 0.829 & 0.821 & 0.813 & 0.795 & 0.828 & 0.844 & 0.858 & 0.801 & 0.853 & 0.841 & 0.863 & 0.837 & 0.855 & 0.858 \\ \cline{3-17} 
 & MTL-LSTM & 0.887 & 0.903 & 0.909 & 0.877 & 0.901 & 0.899 & 0.898 & 0.921 & 0.891 & 0.897 & 0.906 & 0.894 & 0.919 & 0.913 & 0.890  \\ \cline{3-17} 
 & HQ-DNN &0.806&0.800&0.806&0.819&0.816&0.800&0.792&0.795&0.816&0.822&0.809&0.811&0.757&0.782&0.787&\\ \cline{3-17} 
 & FTTransformer & 0.933 & \cellcolor{cyan!50}\textbf{0.932} & 0.929 & 0.944 & 0.922 & \cellcolor{cyan!50}\textbf{0.945} & 0.933 & \cellcolor{cyan!50}\textbf{0.951} & \cellcolor{cyan!50}\textbf{0.919} & \cellcolor{cyan!50}\textbf{0.952} & 0.938 & 0.940 & \cellcolor{cyan!50}\textbf{0.941} & 0.943 & 0.935 \\ \cline{3-17} 
 & \NAMEA \hspace*{2pt}(Ours) & \cellcolor{orange!70}\textbf{0.940} & 0.931 & \cellcolor{orange!70}\textbf{0.940} & \cellcolor{orange!70}\textbf{0.949} & \cellcolor{orange!70}\textbf{0.922} & 0.935 & \cellcolor{orange!70}\textbf{0.938} & 0.936 & 0.910 & 0.937 & \cellcolor{orange!70}\textbf{0.962} & \cellcolor{orange!70}\textbf{0.951} & 0.926 & \cellcolor{orange!70}\textbf{0.960} & \cellcolor{orange!70}\textbf{0.950} \\ \hline
\end{tabular}}
    \caption{Performance Metrics Across Models and Scenarios for 6-Class Classification}
    \label{tab:6-results}
\end{table*}

\vspace*{-0.25cm}
\section{Experiments}\label{sec:exp}
In this section, we provide our experimental analysis of QUPID compared to four state-of-the-art traditional deep learning methods for anomaly detection in smart grid systems. We extend our experiments to measure each models robustness capabilities when faced against two established ML attacks.
\begin{figure}[t]
    \centering\includegraphics[width=\columnwidth, trim={0 0.25cm 0.25cm 0},clip=True]{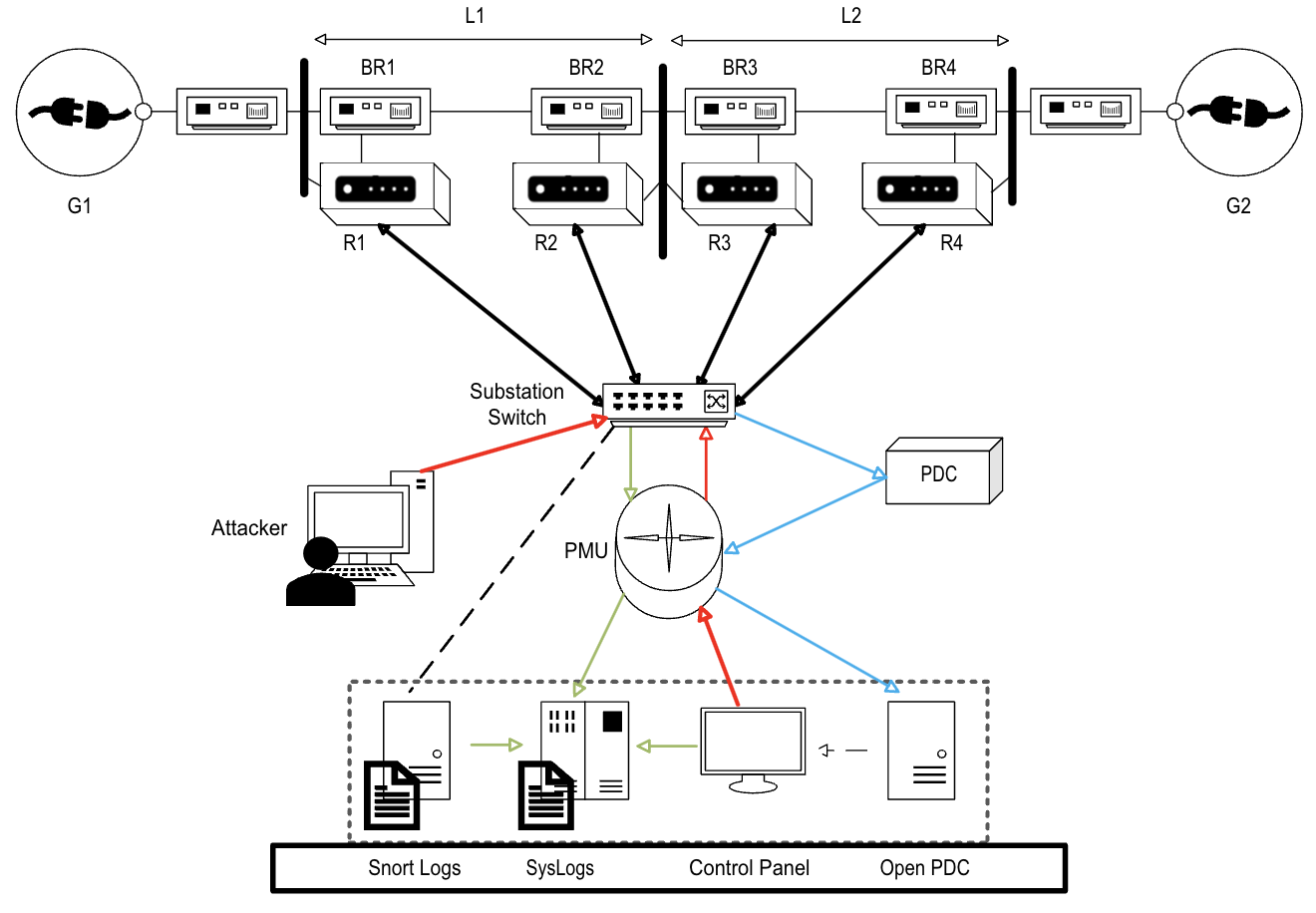}
    \caption{The smart grid system overview.}
    \label{fig:system}
\end{figure}

\noindent \textbf{Experimental Setup.} We implement our experiments with Python 3.8. Each experiment is conducted on a single GPU-assisted compute node installed with a Linux 64-bit operating system. Our testbed resources include 72 CPU cores with 377 GB of RAM in total. Our allocated node is also provisioned with 2 GPUs with 40GB of VRAM per GPU.

All quantum components of our model were implemented using 
the PennyLane QML 
simulator~\mbox{\cite{pennylane}}. To encode the complex-valued classical input data in a quantum space, we used four 5-qubit circuits (20 qubits in total). Each circuit encodes up to 32 complex-valued inputs from neighboring PMUs. The circuit encoder uses an amplitude embedding technique. Each parameterized ansatz consists of 3 alternating layers of single-qubit rotations (i.e., \texttt{RY} and \texttt{RZ} gates) and entangling layers. The entangling layers use a circular arrangement of \texttt{CNOT} gates, where each qubit $i$ is entangled with qubit $i+1$ (and the last with the first). The rotational parameters in these layers constitute the trainable parameters. In addition, to evaluate the robustness of R-QUPID, we explicitly simulated the effects of quantum noise. As described in our theoretical analysis , we introduced a depolarizing channel after the parameterized ansatz in each circuit.
\\ 
\noindent \textbf{Dataset.} Created by Oak Ridge National Laboratory \cite{Morris_ICSDataset}, the ICS dataset \footnote{\url{https://sites.google.com/a/uah.edu/tommy-morris-uah/ics-data-sets}} 
is based on a small-scale smart grid system displayed in Figure~\ref{fig:system}. Specifically, the dataset reports simulated results in a smart grid system 
composed of two generators (G1, G2), four circuit breakers (BR1-BR4), and four intelligent electronic devices (IEDs) (R1-R4). Two transmission lines (L1, L2) connect the breakers while a phasor measurement unit (PMU) processes and stores the data. The IEDs are relays that utilize a distance protection scheme to trigger breakers when faults are detected. The dataset is composed of 15 scenarios that we denote as S1-S15 in Table~\ref{tab:6-results}. Each scenario contains 37 different potential events 
consisting of normal operations, natural events, and cyber threats. Specifically, a normal operation is line maintenance where one or more relays are purposely disabled for servicing. An example of a naturally occurring event is a short-circuit fault where a power line is shorted at multiple locations and specified by percentage distances along a line. Cyber threats include remote tripping command injection, altering relay settings, and data injections.

There are a total of 42 classes and we have subcategorized them into 6 classes, labeled 0-5 for classification. 
Each scenario is comprised of 128 features, with gathered data from 4 PMUs, snort logs, and system logs. The PMUs measure data for 29 variables while the control panel logs an additional 12 features. During data exploration, we removed 8 features: 4 control log and 4 snort log entries, because they exhibited zero variance across all samples. Plus, they did not affect model training. Input IDs are also removed as they are directly linked to the sample labels. Therefore, we only use 119 total features. We scaled the data using the MinMaxScaler function and partition the data with a 70/30 split before proceeding with training.\\ 
\noindent \textbf{Baselines.} We compare QUPID to four state-of-the-art supervised deep learning methods. For each approach, we employ the same hyperparameters as in their respective papers. 
We briefly explain each below:\\ 
\vspace*{-0.25cm}
\begin{itemize}
    \item \textbf{Multi-Layer Perceptron (MLP)} - We implement the same MLP used in \cite{han2022adbench}. 
    As in \cite{han2022adbench}, we use the default hyperparameters from the \texttt{Sci-Kit Learn} library \footnote{\url{https://scikit-learn.org/stable/}}.
    \item \textbf{Inception v3 Network (InceptionNet)} - We implement the InceptionNet model from \cite{pan2024multi}. 
    It consists of an initial convolutional layer, multiple Inception modules, a global average pooling layer, and a 
    dense layer to output classification predictions. The Inception modules use 1 $\times$ 1 convolutional kernels to reduce computational cost.
    \item \textbf{Multi-Task Learning-Based Long-Short Term Memory (MTL-LSTM)} - Following \cite{ganjkhani2023multi}, we implement the MTL-LSTM with an LSTM layer, two fully connected layers, and one softmax layer.
    \item \textbf{Quantum-Classical Hybrid
Deep Neural Network (QH-DNN)} - This quantum architecture represents a state-of-the-art and widely adopted approach for anomaly detection~\mbox{\cite{Wang2023,Bhowmik2024}}. It consists of two main phases: (1) feature extraction using classical deep learning to generate real-valued representations from the input data, and (2) anomaly detection using a variational quantum circuit. For feature extraction, we employ two fully connected layers, while the quantum component utilizes \texttt{RY} gates for rotation and \texttt{CNOT} gates for entanglement.
    \item \textbf{Feature Tokenizer + Transformer (FTTransformer)} - As described in \cite{han2022adbench}, the FTTransformer \cite{gorishniy2021revisiting} is a sophisticated adaption of the Transformer architecture originally presented in \cite{vaswani2017attention} for tabular data.
\end{itemize}
\noindent \textbf{Metrics.} We employ standard performance metrics to assess the efficacy of QUPID compared to our chosen baselines. These metrics include:
\begin{itemize}
    \item \textbf{Accuracy} - Describes how many predictions were correct, displaying overall prediction performance.
    \item \textbf{Precision} - Describes the proportion of correct positive predictions out of all positive predictions by a model.
    \item \textbf{Recall} - Describes the proportion of actual positive samples that were correctly predicted by a model.
    \item \textbf{F1-Score} - Describes the predictive capabilities of a model based on its per-class performance. It is considered the harmonic mean of precision and recall.
\end{itemize}
Anomaly detection datasets are highly imbalanced as anomalies are rarities. For this reason, we also opt to use ROC-AUC, Matthew's Correlation Coefficient (MCC), and Geometric Mean (G-Mean) to give a comprehensive overview of each models performance regardless of class imbalance. 
\begin{itemize}
    \item \textbf{ROC-AUC} - Evaluates how well a model can distinguish between majority and minority classes of a dataset.
    \item \textbf{MCC} - Considers true positives, true negatives, false positives, and false negatives when evaluating a classifier, making it a well-balanced metric.
    \item \textbf{G-Mean} - Evaluates the square root of the product of the true positive rate and true negative rate to give a balanced perspective of model performance regardless of class imbalance.
\end{itemize}

\noindent \textbf{Attacks.} For our robustness experiments we opt to use 2 well-established attacks. Each is described below:
\begin{itemize}
    \item \textbf{Fast Gradient Sign Method (FGSM)} - Generates 
    adversarial examples by adding subtle perturbations to the input data based on a model's gradients to cause misclassifications while remaining imperceptible \cite{goodfellow2014explaining}.
    \item \textbf{Projected Gradient Descent (PGD)} - Iteratively 
    generates adversarial examples via gradient steps followed by projecting the perturbations back to the original space to remain imperceptible. It aims to maximize model error while restricting the perturbations to a defined range \cite{madry2018towards}.
\end{itemize}

\subsection{Comparison of Traditional and QML Models}
Table~\ref{tab:6-results} illustrates the performance of QUPID compared to our baseline methods across 15 scenarios and 7 metrics in the 6-class classification configuration. QUPID was employed with the Adam optimizer, the ReLU activation function, a learning rate of $10^{-2}$, and a batch size of 128. Each of our experiments were run for 200 epochs. 

We observe that QUPID outperforms every baseline across all metrics for the \textit{majority} of scenarios (e.g., at least 8 out of 15 scenarios). In particular, QUPID dominates all methods and metrics on S1, S3, S11, and S12. This suggests that QUPID is more effective under certain data distributions compared to other methods. Even more interesting is that QUPID's most impressive metrics are recall and F1-score where both metrics are their highest for 10 out of 15 scenarios. This suggests that QUPID excels at identifying positive instances, an important capability in anomaly detection scenarios, while consistently maintaining a balance between the trade-off of precision and recall. The high values of ROC-AUC, MCC, and G-Mean further emphasize QUPID's robustness to class imbalance. In majority of cases, QUPID outperforms the FTTransformer, but in cases where the FTTransformer surpasses QUPID could be further examined to uncover areas of refinement. The quantum baseline, HQ-DNN, 
achieves comparable performance to the MLP, InceptionNet, and MTL-LSTM, but falls short of the performance achieved by FTTransformer and QUPID. This highlights the importance of incorporating complex-valued representations to achieve higher performance. Furthermore, QUPID consistently outperforms the MTL-LSTM despite its sequential processing capabilities. Another key point to observe are the fluctuations in metrics for the baseline models. QUPID's results are much more consistent across all scenarios and metrics, suggesting high generalizability and reliability. The inference time for QUPID and all baseline models is negligible (less than $0.1$ seconds per input instance), so they all are suitable for practical deployment.

\begin{figure*}[htp!]
    \centering\includegraphics[width=\textwidth, trim={0 0.05cm 0.25cm 0.25cm},clip=True]{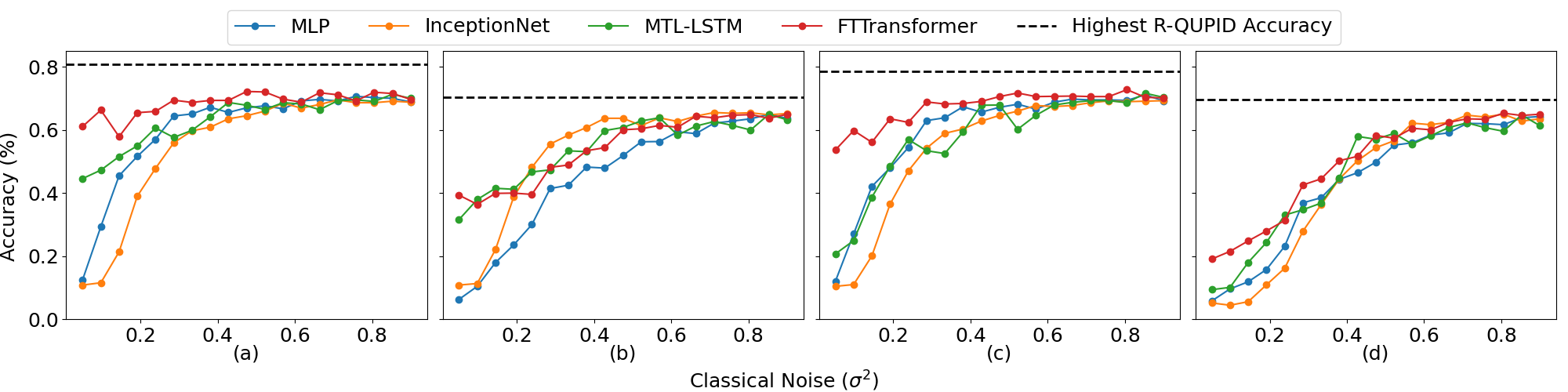}
    \caption{Performance under attack with varying attack bounds. (a) and (b) refer to the FGSM attack while (c) and (d) refer to the PGD attack, configured with 0.05 and 0.1 attack bounds, respectively.}
    \label{fig:attacks}
\end{figure*}
\vspace*{-0.25cm}
\subsection{Robustness of QML Compared to Baselines}
We evaluate a range of classical noise values for our classical DL baselines and compared them with the most optimal performance of R-QUPID, which combines quantum noise and classical noise. This comparison is necessary because the measurements of quantum noise and classical noise are not equivalent. Classical noise follows a Gaussian mechanism quantified by $\sigma^2$, while quantum noise arises from a depolarizing channel quantified by $p$. The values $\sigma^2$ and $p$ represent the noise levels of each model, respectively. 

Figure~\ref{fig:attacks} illustrates that R-QUPID outperforms all baseline methods. The baselines are evaluated with classical noise values ranging from 0.05 to 0.9. Each plot displays model performance under the FGSM and PGD attacks, with attack bounds set to 0.05 and 0.1. Particularly, for the FGSM attacks, R-QUPID reaches $\approx$ 81\% and $\approx$ 70\%, respectively. In the first case, the baselines perform $\approx$ 10\% less than that of R-QUPID and $\approx$ 5\% less in the second case. For the PGD attack, R-QUPID reaches $\approx$ 79\% and $\approx$ 70\%, respectively. In the first case, the baselines perform $\approx$ 8\% less than that of R-QUPID and $\approx$ 3\% in the second case. A key observation from these results is that although increasing classical noise improves model robustness, the baselines struggle to reach R-QUPID’s performance, even at higher noise levels. This suggests that R-QUPID’s resilience under adversarial conditions is not solely credited to a byproduct of noise, but rather to an inherent advantage of its quantum-inspired design. Furthermore, we observe a connection between quantum noise and classical noise that contributes positively to R-QUPID’s robustness, allowing it to maintain higher accuracy even under varying attack intensities. This indicates that quantum-informed representations may provide a more stable feature space, reducing the effectiveness of adversarial perturbations.
\vspace*{-0.25cm}
\section{Conclusion}\label{sec:con}
In this work we have presented QUPID as a partitioned quantum neural network for anomaly detection in smart grid systems. We extend our network to R-QUPID to provide certifiable robustness guarantees when under attack. Our findings emphasize the advantages of QUPID and R-QUPID in smart grid anomaly detection as they consistently outperform state-of-the-art traditional deep learning approaches across many datasets and metrics. Fundamentally, we have shown that quantum-enhanced feature representations facilitate more effective modeling of the high-dimensional and nonlinear characteristics seen in smart grid data. Our results also emphasize the resilience of QML models compared to traditional ML models when faced with adversarial threats. Furthermore, the partitioning scheme mitigates the scalability issue typically seen in quantum computing, making it an effective solution for large-scale utilization. With regard to the rise of cyber threats in smart grid environments, this work provides a new research avenue toward robust and scalable anomaly detection solutions while further enforcing the growing importance of QML in securing modern smart grid systems.
\vspace*{-0.25cm}
\section*{Acknowledgement}
This work was supported by the Korea Institute of Energy Technology Evaluation and Planning (KETEP) grant funded by the Korea government (MOTIE) (RS-2023-00303559, A Study on Development of Cyber-Physical Attack Response System and Security Management System for Maximizing Availability of Real-Time Distributed Resources).


\bibliographystyle{IEEEtran}
\bibliography{ref}

\clearpage
\end{document}